\def\BState{\State\hskip-\ALG@thistlm}
\newtheorem{thm}{Theorem}[section]
\newtheorem{cor}[thm]{Corollary}
\newtheorem{thmProp}{Theorem}[section]
\newtheorem{prop}[thmProp]{Proposition}
\newtheorem{thmrem}{Theorem}[section]
\newtheorem{rem}[thmrem]{Remark}
\newcommand{\usvt}{\texttt{CL-USVT}\xspace}
\newcommand{\nbs}{\texttt{CL-NBS}\xspace}
\newcommand{\naive}{\texttt{CL-NAIVE}\xspace}
\newcommand{\trace}{\text{trace}}
\newcommand{\E}{\mathbb{E}}
\renewcommand{\P}{\mathbb{P}}
\newcommand{\R}{\mathbb{R}}
\newcommand{\K}{\mathcal{K}}
\newcommand{\ncge}{\text{NCGE}\xspace}
\newcommand{\nclm}{\text{NCLM}\xspace}
\title{\textbf{On clustering network-valued data\footnote{Conference version of this paper will appear in NIPS-2017.}}}
\author{Soumendu Sundar Mukherjee\footnote{Department of Statistics,
		University of California at Berkeley, CA 94720-3860. E-mail address: \texttt{soumendu@berkeley.edu}.}, Purnamrita Sarkar\footnote{Department of Statistics and Data Sciences, University of Texas at Austin, TX 78712. E-mail address: \texttt{purna.sarkar@austin.utexas.edu}.}, and Lizhen Lin\footnote{Department of Applied and Computational Mathematics and Statistics, Univeristy of Notre Dame, Notre Dame, Indiana 46556. E-mail address: \texttt{lizhen.lin@nd.edu}.}
}
\date{}
\begin{document}

\maketitle

\begin{abstract}
 Community detection, which focuses on clustering nodes or detecting communities in (mostly) a single network, is a problem of considerable practical interest and has received a great deal of attention in the  research community. While being able to cluster within a network is important, there are emerging needs to be able to \emph{cluster multiple networks}. This is largely motivated by the routine collection of network data that are generated from potentially different populations. These networks may or may not have node correspondence. When node correspondence is present, we cluster networks by summarizing a network by its graphon estimate, whereas when node correspondence is not present, we propose a novel solution for clustering such networks by associating a computationally feasible feature vector to each network based on trace of powers of the adjacency matrix. We illustrate our methods using both simulated and real data sets, and theoretical justifications are provided in terms of consistency.
\end{abstract}

\section{Introduction}
\label{sec-intro}
A network,  which is used to model interactions or communications  among a set of agents or nodes, is arguably among one of the most common and important representations for modern complex data. Networks are  ubiquitous in many scientific fields, ranging from computer networks, brain networks and biological networks, to social networks, co-authorship networks and many more.  Over the past few decades, great advancement has been made in developing models and methodologies for inference of networks. There are a range of rigorous models for networks, starting from the relatively simple Erd\"{o}s-R\'{e}nyi model \cite{1959erdos}, stochastic blockmodels and their extensions \cite{holland1983stochastic, newmanblock,ball2011efficient}, to infinite dimensional graphons \cite{graphon1, gao2015}. These models are often used for community detection, i.e. clustering the nodes in a network. Various community detection algorithms or methods have been proposed, including modularity-based methods \cite{newman06}, spectral methods \cite{rohe_yu}, likelihood-based methods \cite{bickel2009nonparametric,choi2012stochastic,bickel2013asymptotic,amini2013pseudo}, and optimization-based approaches like those based on semidefinite programming \cite{amini2014semidefinite}, etc. 
 
The majority of the work in the community detection literature including the above mentioned focus on finding communities among the nodes in \emph{a single network}.  While this is still a very important problem with many open questions, there is an emerging need to be able to detect \emph{clusters among multiple network-valued objects}, where a network itself is a fundamental unit of data. This is largely motivated by the routine collection of populations or subpopulations of network-valued data objects. Technological advancement and the explosion of complex data in many domains has made this a somewhat common practice.

There has been some notable work on graph kernels in the Computer Science literature~\cite{Vishwanathan:2010,graphlet:09}. In these works the goal is to efficiently compute different types of kernel similarity matrices or their approximations between networks. In contrast, we ask the following statistical questions. Can we cluster networks \textit{consistently} from a mixture of graphons, when 1) there is node correspondence and 2) when there isn't. The first situation arises when one has a dynamic network over time, or multiple instantiations of a network over time. If one thinks of them as random samples from a mixture of graphons, then can we cluster them? Note that this is not answered by methods which featurize graphs using different statistics. Our work proposes a simple and general framework for the first question - viewing the data as coming from a mixture model on graphons. This is achieved by first obtaining a graphon estimate of each of the networks, constructing a distance matrix based on the graphon estimates, and then performing spectral clustering on the resulting distance matrix. We call this algorithm Network Clustering based on Graphon Estimates (\ncge). 
 
 The second situation arises when one is interested in global properties of a network. This setting is closer to that of graph kernels. Say we have co-authorship networks from Computer Science and High Energy Physics. Are these different types of networks? There has been a lot of empirical and algorithmic work on featurizing networks or computing kernels between networks. But most of these features require expensive computation. We propose a simple feature based on traces of powers of the adjacency matrix for this purpose which is very cheap to compute as it involves only matrix multiplication. We cluster these features and call this method Network Clustering based on Log Moments (\nclm). 

We provide some theoretical guarantees for our algorithms in terms of consistency, in addition to extensive simulations and real data examples. The simulation results show that our algorithms clearly outperform the naive yet popular method of clustering (vectorized) adjacency matrices in various settings. We also show that in absence of node correspondence, our algorithm is consistently better and faster than methods which featurize networks with different global statistics and graphlet kernels. We also show our performance on a variety of real world networks, like separating out co-authorship networks form different domains and ego networks.

The rest of the paper is organized as follows. In Section~\ref{sec:related} we briefly describe graphon-estimation methods and other related work. Next, in Section~\ref{sec:framework} we formally describe our setup and introduce our algorithms. Section~\ref{sec:consistency} contains some theory for these algorithms. In Section~\ref{sec:exp} we provide simulations and real data examples. We conclude with a discussion in Section~\ref{sec:disc}. Proofs of the main results and some further details are relegated to the appendices.

\section{Related work}
\label{sec:related}
The focus of this paper is on 1) clustering networks which have node correspondence based on estimating the underlying graphon and 2) clustering networks without node correspondence based on global properties of the networks. We present the related work in two parts: first we cite two such methods of obtaining graphon estimates, which we will use in our first algorithm. 
Second, we present existing results that summarize a network using different statistics and compare those to obtain a measure of similarity.

A prominent estimator of graphons is the so called Universal Singular Value Thresholding (USVT) estimator proposed by \cite{chatterjee2015}. The main idea behind USVT is to essentially approximate the rank of the population matrix by thresholding the singular values of the observed matrix at an universal threshold, and then compute an approximation of the population using the top singular values and vectors. 

A recent work \cite{zhang_et_al} proposes a novel, statistically consistent and computationally efficient approach  for estimating the link probability matrix by neighborhood smoothing. Typically for large networks USVT is a lot more scalable than the neighborhood-smoothing approach. There are several other methods for graphon estimations, e.g., by fitting a stochastic blockmodel \cite{olhede_wolfe}. These methods can also be used in our algorithm.   

In \cite{chen2015}, a graph-based method for change-point detection is proposed, where an independent sequence of observations are considered. These are generated i.i.d. under the null hypothesis, whereas under the alternative, after a change point, the underlying distribution changes. The goal is to find this change point. The observations can be high-dimensional vectors or even networks, with the latter bearing some resemblance with our first framework. Essentially the authors of \cite{chen2015} present a statistically consistent clustering of the observations into ``past'' and ``future''. We remark here that our graphon-based clustering algorithm suggests an alternative method for change point detection, namely by looking at the second eigenvector of the distance matrix between estimated graphons. Another related work is due to \cite{eric-paper} which aims to extend the classical large sample theory to model network-valued objects. 

For comparing global properties of networks, there has been many interesting works which featurize networks based on global features~\cite{aliakbary2015}. In the Computer Science literature, graph kernels have gained much attention~\cite{Vishwanathan:2010,graphlet:09}. In these works the goal is to efficiently compute different types of kernel similarity matrices or their approximations between networks.


\section{A framework for clustering networks}
\label{sec:framework}
Let $G$ be a binary random network or graph with $n$ nodes. Denote by $A$ its adjacency matrix, which is an $n$ by $n$ symmetric  matrix with binary entries. That is, $A_{ij} = A_{ji}\in\{0,1\}, 1\leq i < j\leq n$, where $A_{ij}=1$ if there is an observed edge between nodes $i$ and $j$, and $A_{ij}=0$ otherwise.   All the diagonal elements of $A$ are structured to be zero (i.e. $A_{ii}=0$). We assume the following random Bernoulli model with:
\begin{align}
\label{eq-bernoulli}
A_{ij}\mid P_{ij}\sim \text{Bernoulli}(P_{ij}),\; i < j,
\end{align}
where $P_{ij}=P(A_{ij}=1)$ is the link probability between nodes $i$ and $j$.  We denote the link probability matrix as $P=((P_{ij}))$. The edge probabilities are often modeled using the so-called \emph{graphons}. A graphon $f$ is a nonnegative bounded, measurable symmetric function $f : [0,1]^2 \rightarrow [0,1]$. Given such an $f$, one can use the model
\begin{align}
\label{eq-graphon}
P_{ij} = f(\xi_i,\xi_j), 
\end{align}
where $\xi_i, \xi_j$ are \emph{i.i.d. uniform random variables} on $(0,1)$. In fact, any (infinite) exchangeable network arises in this way (by Aldous-Hoover representation \cite{ALDOUS1981, hoover79}).

 Intuitively speaking, one wishes to model a discrete network $G$ using some continuous object $f$.  Our current work focuses on the problem of  \emph{clustering networks}. Unlike in a traditional setup, where one observes a single network (with potentially growing number of nodes) and the goal is  often to cluster the nodes, here we observe multiple networks and are interested in clustering these networks viewed as fundamental data units.

\subsection{Node correspondence present }\label{sec:framework:1}

A simple and natural model for this is what we call the \emph{graphon mixture model} for obvious reasons: there are only $K$ (fixed) underlying graphons $f_1,\ldots, f_K$ giving rise to link probability matrices $\Pi_1,\ldots, \Pi_K$ and we observe $T$ networks sampled i.i.d. from the mixture model
\begin{equation}\label{eq-mixture}
  	\P_{mix}(A) = \sum_{i = 1}^K q_i \P_{\Pi_i}(A),
  \end{equation}
where the $q_i$'s are the mixing proportions and  $\P_P(A) = \prod_{u < v} P_{uv}^{A_{uv}}(1 - P_{uv})^{1 - A_{uv}}$ is the probability of observing the adjacency matrix $A$ when the link probability matrix is given by $P$. Consider $n$ nodes, and  $T$ independent networks $A_i,i\in [T]$, which define edges between these $n$ nodes.  We propose the algorithm the following simple and general algorithm (Algorithm~\ref{alg:ncge}) for clustering them:
\begin{algorithm}[H]\label{alg:ncge}
	\caption{Network Clustering based on Graphon Estimates (\ncge)}
	\label{mainalg}
	\begin{algorithmic}[1]
		\State {\bf Graphon estimation.} Given  $A_1,\ldots, A_T$, estimate their corresponding link probability matrices  $P_1,\ldots, P_T$ using any one of the `blackbox' algorithms such as USVT  (\cite{chatterjee2015}), the neighborhood smoothing approach by \cite{zhang_et_al} etc. Call these estimates $\hat P_1,\ldots,\hat P_T$.
		
		\State {\bf Forming a distance matrix.}
		Compute the $T$ by $T$ distance matrix  $\hat D$ with $\hat D_{ij}=\|\hat{P}_i-\hat{P}_j\|_F$, where $\|\cdot\|_F$ is the Frobenius norm.  $\hat D$ is considered an estimate of $D=((D_{ij}))$ where $ D_{ij}=\|P_i-P_j\|_F$.
		
		\State {\bf Clustering.} Apply the spectral clustering algorithm to the distance matrix $\hat D$.
	\end{algorithmic}
\end{algorithm}

We will from now on denote the above algorithm with the different graphon estimation (`blackbox') approaches as follows: the algorithm with USVT a blackbox will be denoted by \usvt and the one with the neighborhood smoothing method will be denoted by \nbs. We will compare these two algorithms with the \naive method which does not estimate the underlying graphon, but uses the vectorized binary string representation of the adjacency matrices, and clusters those (in the spirit of \cite{chen2015}).

\subsection{Node correspondence absent}\label{sec:framework:2}

We will use certain graph statistics to construct a feature vector. The basic statistics we choose are the trace of powers of the adjacency matrix, suitably normalized and we call them \emph{graph moments}:
\begin{align}
\label{eq:moment}
m_k(A) = \trace (A/n)^k.
\end{align}
These statistics are related to various path/subgraph counts. For example, $m_2(A)$ is the normalized count of the total number of edges, $m_3(A)$ is the normalized triangle count of $A$. Higher order moments are actually counts of closed walks (or directed circuits). Figure~\ref{fig:fourth_moment} shows the circuits corresponding to $k = 4$.
\begin{figure}[!ht]
  \centering
  \begin{tabular}{cccc}
    \includegraphics[width=.22\textwidth]{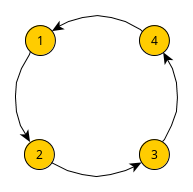}&\includegraphics[width=.22\textwidth]{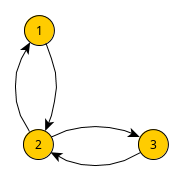}&\includegraphics[width=.22\textwidth]{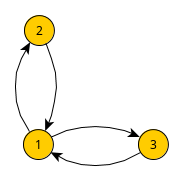}&\includegraphics[width=.16\textwidth]{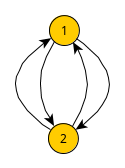}\\
    (A) & (B) & (C) & (D)
  \end{tabular}
  \caption{Circuits related to $m_4(A)$.}
  \label{fig:fourth_moment}
\end{figure}

The reason we use graph moments instead of subgraph counts is that the latter are quite difficult to compute and present day algorithms work only for subgraphs up to size $5$. On the contrary, graph moments are easy to compute as they only involve matrix multiplication.

While it may seem that this is essentially the same as comparing the eigenspectrum, it is not clear how many eigenvalues one should use. Even if one could estimate the number of large eigenvalues using an USVT type estimator, the length is different for different networks. The trace takes into account the relative magnitudes of $\lambda_i$ naturally. In fact, we tried (see Section~\ref{sec:exp}) using the top few eigenvalues as the sole features; but the results were not as satisfactory as using $m_k$.

We now present our second algorithm (Algorithm~\ref{alg:nclm}) Network Clustering with Log Moments (\nclm). In step 1, for some positive integer $J\ge 2$, we compute $g_J(A):=(\log m_2(A),\ldots, \log m_J(A))\in \R^J$. Our feature map here is $g(A) = g_J(A)$. For step 2, we use the Euclidean norm, i.e. $\hat{D}_{ij}=\|g_i-g_j\|$.

\begin{algorithm}[H]
	\caption{Network Clustering based on Log Moments (\nclm)}
	\label{alg:nclm}
	\begin{algorithmic}[1]
		\State {\bf Moment calculation.} For a network $A_i,i\in [T]$ and a positive integer $J$, compute the feature vector $g_J(A) := (\log m_1(A), \log m_2(A),\ldots, \log m_J(A))$ (see Eq~\ref{eq:moment}).
				
		\State {\bf Forming a distance matrix.}
		$d(A_1,A_2) := d(g_J(A_1),g_J(A_2))$. 
		
		\State {\bf Clustering.} Apply the spectral clustering algorithm to the distance matrix $\hat D$.
	\end{algorithmic}
\end{algorithm}

\textbf{Note:} The rationale behind taking a logarithm of the graph moments is that, if we have two graphs with the same degree density but different sizes, then the degree density will not play any role in the the distance (which is necessary because the degree density will subdue any other difference otherwise). The parameter $J$ counts, in some sense, the effective number of eigenvalues we are using.

\section{Theory}
\label{sec:theory}
We will only mention our main results and discuss some of the consequences here. All the proofs and further details can be found in the appendix, in Section~\ref{sec:proofs}.
\subsection{Results on \ncge}\label{sec:consistency}
We can think of $\hat{D}_{ij}$ as estimating $D_{ij} = \|P_i - P_j\|_F$.
\begin{thm}\label{thm:main_ncge}
Suppose $D = ((D_{ij}))$ has rank $K$. Let $V$ (resp. $\hat V$) be the $T \times K$ matrix whose columns correspond to the leading $K$ eigenvectors (corresponding to the $K$ largest-in-magnitude eigenvalues) of $D$ (resp. $\hat D$). Let $\gamma = \gamma(K,n,T)$ be the $K$-th smallest eigenvalue value of $D$ in magnitude. Then there exists an orthogonal matrix $\hat O$ such that
	\begin{align*}
	\|\hat V \hat O - V\|^2_F \le \frac{64T}{\gamma^2} \sum_i \|\hat P_i - P_i\|_F^2.
	\end{align*}
\end{thm}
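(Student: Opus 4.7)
The plan is to split the bound into two pieces: a deterministic Davis--Kahan style perturbation inequality for the top-$K$ eigenspace, followed by a crude but sufficient control of $\|\hat D - D\|_F^2$ in terms of the per-network graphon estimation errors.

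First, I would invoke a variant of the Davis--Kahan $\sin\Theta$ theorem (in the form due to Yu--Wang--Samworth, which handles the ``largest in magnitude'' ordering of eigenvalues directly and is symmetric in $D$ and $\hat D$). Since $D$ has rank exactly $K$, its $(K+1)$-st eigenvalue in magnitude is $0$, so the relevant eigengap separating the top-$K$ eigenvalues from the rest is precisely $\gamma$. The Frobenius version of the theorem yields an orthogonal $K\times K$ matrix $\hat O$ such that
\begin{align*}
\|\hat V \hat O - V\|_F^2 \le \frac{16\,\|\hat D - D\|_F^2}{\gamma^2}.
\end{align*}
This takes care of the eigenvector comparison and is where the eigengap $\gamma$ enters.

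Next, I would estimate $\|\hat D - D\|_F^2$ entrywise. By the reverse triangle inequality applied to the Frobenius norm,
\begin{align*}
|\hat D_{ij} - D_{ij}| = \bigl|\|\hat P_i - \hat P_j\|_F - \|P_i - P_j\|_F\bigr| \le \|\hat P_i - P_i\|_F + \|\hat P_j - P_j\|_F.
\end{align*}
Squaring and using $(a+b)^2 \le 2a^2 + 2b^2$, then summing over $i,j \in [T]$, gives
\begin{align*}
\|\hat D - D\|_F^2 \le 2\sum_{i,j}\bigl(\|\hat P_i - P_i\|_F^2 + \|\hat P_j - P_j\|_F^2\bigr) = 4T \sum_i \|\hat P_i - P_i\|_F^2.
\end{align*}

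Combining the two steps yields $\|\hat V\hat O - V\|_F^2 \le (64T/\gamma^2) \sum_i \|\hat P_i - P_i\|_F^2$, matching the claimed constant. I do not expect any serious technical obstacle: the only point requiring care is selecting the right flavor of Davis--Kahan so that (a) eigenvalues are ordered by magnitude (the distance matrix $D$ is not positive semidefinite in general), and (b) the bound is already in the ``with orthogonal alignment'' form, so that no further rotation argument is required. Everything else is a one-line triangle-inequality calculation.
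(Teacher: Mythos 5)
Your overall route is the same as the paper's: the triangle-inequality estimate $\|\hat D - D\|_F^2 \le 4T\sum_i \|\hat P_i - P_i\|_F^2$ combined with a Davis--Kahan step carrying the factor $16/\gamma^2$, which multiplies out to the stated $64T/\gamma^2$. The triangle-inequality half is identical to the paper's argument and is fine.

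The gap is in how you invoke Davis--Kahan. Theorem 2 of Yu--Wang--Samworth \cite{yu2015useful} is stated for the eigenvectors attached to a \emph{consecutive} block $\lambda_r \ge \cdots \ge \lambda_s$ in the usual descending ordering, with eigengap measured at the two ends of that block; it does not directly cover the ``$K$ largest-in-magnitude'' selection. This matters here because $D$ is an indefinite distance matrix of rank $K$: its nonzero spectrum has the form $\lambda_1 \ge \cdots \ge \lambda_u > 0 = \cdots = 0 > \lambda_v \ge \cdots \ge \lambda_T$, so the target eigenvalues form two blocks separated by the zero eigenvalues, and a single off-the-shelf application ``with magnitude ordering'' is not available. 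The paper repairs exactly this point: it applies the Yu et al.\ theorem once to the positive block (gap $\lambda_u$) and once to the negative block (gap $-\lambda_v$), each application contributing $8\|\hat D - D\|_F^2$ over the corresponding squared gap, and then assembles $V = [V_+ : V_-]$, $\hat V = [\hat V_+ : \hat V_-]$ with the block-diagonal orthogonal alignment $\hat O = \mathrm{diag}(\hat O_+, \hat O_-)$; bounding both gaps below by $\gamma = \min\{\lambda_u, -\lambda_v\}$ gives the $16\|\hat D - D\|_F^2/\gamma^2$ you quoted. So your final constant is correct, but only because the two-block argument happens to produce it; as written, your single-application step needs this splitting-and-realignment argument (or some other justification for magnitude ordering) to be legitimate.
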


\begin{cor}\label{cor:main_ncge}
Assume for some absolute constants $\alpha, \beta > 0$ the following holds for each $i = 1,\ldots,T$:
	\begin{equation}\label{eq-cond}
	\frac{ \|\hat P_i - P_i\|_F^2}{n^2} \le C_i n^{-\alpha}(\log n)^{\beta},
	\end{equation}
	either in expectation or with high probability ($\ge 1 - \epsilon_{i,n}$). Then in expectation or with high probability ($\ge 1 - \sum_i \epsilon_{i,n}$) we have that
	\begin{equation}\label{eq-evec-difference}
		\|\hat V \hat O  - V \|_F^2 \le \frac{64C_T T^2n^{2 - \alpha}(\log n)^{\beta}}{\gamma^2}.
	\end{equation}
	where $C_T = \max_{i \le i \le T} C_i$. (If there are $K$ (fixed, not growing with $T$) underlying graphons, the constant $C_T$ does not depend on $T$.) Table~\ref{table:error_bd_for_various_GE} reports values of $\alpha$, $\beta$ for various graphon estimation procedures (under assumptions on the underlying graphons, that are described in the appendix, in Section~\ref{sec:proofs}.
\end{cor}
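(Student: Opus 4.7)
The plan is to deduce the corollary essentially as a direct substitution into the bound from Theorem~\ref{thm:main_ncge}, combined with an averaging or union-bound step to handle the ``in expectation'' or ``with high probability'' clauses, and then separately appeal to existing results in the literature to populate the table of $(\alpha,\beta)$ values.

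First I would invoke Theorem~\ref{thm:main_ncge} verbatim to obtain $\|\hat V \hat O - V\|_F^2 \le \tfrac{64 T}{\gamma^2} \sum_{i=1}^{T} \|\hat P_i - P_i\|_F^2$. The assumption \eqref{eq-cond} rewrites as $\|\hat P_i - P_i\|_F^2 \le C_i n^{2-\alpha}(\log n)^\beta$ for each $i$, so summing and replacing each $C_i$ by $C_T = \max_{i \le T} C_i$ gives $\sum_i \|\hat P_i - P_i\|_F^2 \le C_T T n^{2-\alpha}(\log n)^\beta$. Feeding this back into the theorem's bound yields exactly \eqref{eq-evec-difference}. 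In the expectation case I would use linearity of expectation on the sum; in the high-probability case I would take a union bound over the $T$ events on which \eqref{eq-cond} holds, which is what produces the probability $\ge 1 - \sum_i \epsilon_{i,n}$. The clause about $C_T$ not depending on $T$ when there are only $K$ underlying graphons follows because in that case the $C_i$'s take at most $K$ distinct values determined by the $K$ graphons, so $C_T$ is bounded by a constant depending only on $K$ and the graphons.

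For the table of $(\alpha,\beta)$ pairs, I would simply cite the relevant consistency theorems for each black-box graphon estimator. For USVT, Chatterjee's bound \cite{chatterjee2015} gives $\|\hat P - P\|_F^2 / n^2 = O(n^{-1/3})$ under boundedness/low complexity assumptions on the graphon, corresponding to $\alpha = 1/3$, $\beta = 0$. For the neighborhood smoothing estimator of \cite{zhang_et_al}, their main theorem yields a rate of order $n^{-1}(\log n)$ (or $n^{-1/2}$ in sparser regimes) under piecewise-Lipschitz/Hölder conditions on $f$, which translates to the relevant $(\alpha, \beta)$ entries. Other estimators (e.g.\ SBM-fitting \cite{olhede_wolfe}) can be slotted in analogously.

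The only real place for care is bookkeeping: making sure the factor $n^2$ from $\|P\|_F^2 \le n^2$ in the normalization of \eqref{eq-cond} is correctly carried through, and verifying that the high-probability events can indeed be intersected by a simple union bound (which requires only that each assumption holds marginally, not jointly). There is no deep obstacle here; the main content of this section is Theorem~\ref{thm:main_ncge}, and the corollary is essentially a bookkeeping step that translates the black-box estimator's error rate into an eigenvector perturbation rate for the clustering problem.
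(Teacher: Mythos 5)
Your proposal matches the paper's (implicit) proof exactly: the corollary follows by substituting the assumed rate $\|\hat P_i - P_i\|_F^2 \le C_i n^{2-\alpha}(\log n)^{\beta}$ into the bound of Theorem~\ref{thm:main_ncge}, replacing each $C_i$ by $C_T$, and handling the two probabilistic clauses via linearity of expectation or a union bound over the $T$ events, with the table entries imported from the cited graphon-estimation guarantees (Theorem 2.7 of \cite{chatterjee2015} for USVT, Theorem 2.2 of \cite{zhang_et_al} for NBS). One small correction on the bookkeeping for the table: the neighborhood-smoothing rate used in the paper is $C_i\sqrt{\log n / n}$, i.e.\ $\alpha = \beta = 1/2$; the $n^{-1}\log n$ rate you mention is the minimax rate, attained only by the impractical combinatorial estimators discussed in a remark in the appendix, not by the NBS estimator itself.
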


\begin{table}[!htbp]
\footnotesize
\centering
\begin{tabular}{|c|ccc|} \hline
	Procedure & USVT & NBS & Minimax rate\\ \hline\hline
	$\alpha$ & $1/3$ & $1/2$ & $1$ \\
	$\beta$ & $0$ & $1/2$ & $1$\\ \hline
\end{tabular}
\caption{Values of $\alpha$, $\beta$ for various graphon estimation procedures.}
\label{table:error_bd_for_various_GE}
\end{table}

While it is hard to obtain an explicit bound on $\gamma$ in general, let us consider a simple equal weight mixture of two graphons to illustrate the relationship between $\gamma$ and separation between graphons. Let the distance between the population graphons be $dn$  We have $D=Z\mathcal{D}Z^T$, where the $2\times 2$ population matrix be $\mathcal{D}$ has $\mathcal{D}(i,j)=\mathcal{D}(j,i)=dn$. Here
$Z \mathcal{D} Z^T=dn(E_T-ZZ^T)$, where $E_T$ is the $T \times T$ matrix of all ones, and the $i$th row of the binary matrix $Z$ has a single one at position $l$ if network $A_i$ is sampled from $\Pi_l$. The eigenvalues of this matrix are $-Tnd/2$ and $-Tnd/2$. Thus in this case $\gamma=Tnd/2$. As a result (\ref{eq-evec-difference}) becomes
\begin{equation}\label{eq-evec-diff-two-by-two}
\|\hat V \hat O  - V \|_F^2 \le \frac{256 C_T n^{ - \alpha}(\log n)^{\beta}}{d^2}.
\end{equation}

Let us look at a more specific case of blockmodels with the same number ($= m$) of clusters of equal sizes ($= n/m$) to gain some insight into $d$. Let $C$ be a $n\times m$ binary matrix of memberships such that $C_{ib}=1$ if node $i$ within a blockmodel comes from cluster $b$. Consider two blockmodels $\Pi_1 = CB_1C^T$ with $B_1=(p-q)I_m+qE_m$ and  $\Pi_2 = CB_2C^T$ with $B_2=(p'-q')I_m+q'E_m$, where $I_m$ is the identity matrix of order $k$ (here the only difference between the models come from link formation probabilities within/between blocks, the blocks remaining the same). In this case 
\[
d^2 = 
\frac{\|\Pi_1 - \Pi_2\|_F^2}{n^2} = \frac{1}{m}(p - p')^2 + \left(1 - \frac{1}{m}\right)(q - q')^2.
\]
The bound (\ref{eq-evec-difference}) can be turned into a bound on the proportion of ``misclustered'' networks, defined appropriately. There are several ways to define misclustered nodes in the context of community detection in stochastic blockmodels that are easy to analyze with spectral clustering (see, e.g., \cite{rohe_yu,lei2015consistency}). These definitions work in our context too. For example, if we use Definition 4 of \cite{rohe_yu} and 
denote by $\mathcal{M}$ the set of misclustered networks, then from the proof of their Theorem 1, we have
\[
|\mathcal{M}| \le 8 m_T \|\hat V \hat O  - V \|_F^2,
\]
where $m_T = \max_{j = 1,\ldots,K} (Z^TZ)_{jj}$ is the maximum number of networks coming from any of the graphons. 

\subsection{Results on \nclm}\label{sec:graphmom}

We first establish concentration of $\trace(A^k).$ The proof uses Talagrand's concentration inequality, which requires additional results on Lipschitz continuity and convexity. This is obtained via decomposing $A \mapsto \trace (A^k)$ into a linear combination of convex-Lipschitz functions. 

\begin{thm}[Concentration of moments]\label{thm:conc}
	Let $A$ be the adjacency matrix of an inhomogeneous random graph with link-probability matrix $P$. Then for any $k$. Let $\psi_k(A):=\frac{n}{k\sqrt{2}}m_k(A)$. Then
	\[
	\P(|\psi_k(A) - \E \psi_k(A)| > t) \le 4\exp(-(t-4\sqrt{2})^2/16).
	\]
\end{thm}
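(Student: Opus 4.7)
My plan is to apply Talagrand's concentration inequality for convex Lipschitz functions to the vector of independent entries $(a_{ij})_{i<j}$ of $A$, which lies in $[0,1]^{\binom{n}{2}}$. Recall that if $f$ is convex and $L$-Lipschitz in the Euclidean metric on $[0,1]^m$, and $X_1,\ldots,X_m$ are independent $[0,1]$-valued, then $\P(|f(X)-Mf(X)|>t)\le 4\exp(-t^2/(16L^2))$, where $Mf$ denotes a median. Since $A\mapsto \trace(A^k)$ is not convex when $k$ is odd, I would first split it into a difference of two convex functions, bound the Euclidean Lipschitz constant of each piece, apply Talagrand to each, and combine via the triangle inequality after replacing medians by expectations.

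For the convex decomposition I would invoke the spectral convexity principle: if $h:\R\to\R$ is convex then $A\mapsto \trace(h(A))=\sum_i h(\lambda_i(A))$ is convex on symmetric matrices. When $k$ is even, $h(\lambda)=\lambda^k$ is already convex, so $\psi_k$ itself is convex and no decomposition is needed. When $k$ is odd, use the identity $\lambda^k=(\lambda_+)^k-(\lambda_-)^k$ with both $(\lambda_\pm)^k$ convex, giving $\psi_k=\psi_k^+-\psi_k^-$, where $\psi_k^\pm := \trace(h_\pm(A))/(k\sqrt{2}\,n^{k-1})$ with $h_\pm(\lambda)=(\lambda_\pm)^k$ is a convex function of the free entries.

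For the Lipschitz bound I would differentiate directly: using $\partial A/\partial a_{ij}=E_{ij}+E_{ji}$ for $i<j$ and the chain rule one gets $\partial\trace(h(A))/\partial a_{ij}=2(h'(A))_{ij}$, so
\[
\|\nabla_a \trace(h(A))\|^2 \le 2\|h'(A)\|_F^2.
\]
Taking $h(\lambda)=\lambda^k$ one has $\|h'(A)\|_F^2 = k^2\trace(A^{2k-2}) \le k^2\|A\|_{\mathrm{op}}^{2k-4}\,\trace(A^2)$; since $\|A\|_{\mathrm{op}}\le n$ and $\trace(A^2)\le n(n-1)\le n^2$, this yields $\|h'(A)\|_F\le k n^{k-1}$, and the identical bound holds for $h_\pm$. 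Thus $\trace(A^k)$ and each of its convex components is $k\sqrt{2}\,n^{k-1}$-Lipschitz in the Euclidean norm on $(a_{ij})_{i<j}$, which is exactly the normalization built into the definition of $\psi_k$, so each convex piece of $\psi_k$ comes out $1$-Lipschitz.

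Finally I would apply Talagrand's inequality to each $1$-Lipschitz convex piece (yielding $\P(|\psi_k^\pm-M\psi_k^\pm|>s)\le 4\exp(-s^2/16)$) and convert medians to expectations via the standard tail-integration $|Mg-\E g|\le \int_0^\infty \P(|g-Mg|>u)\,du$, whose value is an absolute constant that can be absorbed into the additive $4\sqrt{2}$ shift in the exponent. A triangle-inequality combination of the two convex pieces then gives the claim for odd $k$ as well. The most delicate step will be tracking the numerical constants through the convex decomposition and the mean--median conversion so that the final inequality comes out with the stated leading factor of $4$ and additive shift of $4\sqrt{2}$ rather than larger constants.
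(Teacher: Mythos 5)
Your proposal follows essentially the same route as the paper's proof: decompose $\trace(A^k)$ into the convex spectral pieces $\trace(A_+^k)$ and $\trace(A_-^k)$, verify convexity and the $\sqrt{2}kn^{k-1}$ Euclidean Lipschitz bound via the gradient $2k(A^{k-1})_{ij}$ (the paper gets the Lipschitz bound for $A_\pm$ by composing with the non-expansive PSD projection rather than differentiating $h_\pm$ directly, but this is cosmetic), then apply Talagrand's convex-Lipschitz inequality to each $1$-Lipschitz normalized piece, convert median to mean, and combine with a union bound at $t/2$. The only caveat is the constant bookkeeping you flagged: the stated factors $4$ and $4\sqrt{2}$ come from the specific form $\P(|g-\mathbb{M}g|>t)\le 2e^{-t^2/4}$ together with the median--mean gap $2\sqrt{2}$ used in the paper, so your version of Talagrand and the tail-integration conversion would produce slightly different (but immaterial) constants.
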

As a consequence of this, we can show that $g_J(A)$ concentrates around $\bar{g}_J(A) := (\log \E m_2(A), \ldots, \log\E m_J(A))$.
\begin{thm}[Concentration of $g_J(A)$]\label{thm:conc_of_feature}
	Let $\E A = \rho S$, where $\rho \in (0,1)$, $\min_{i,j}S_{ij} = \Omega(1)$, and $\sum_{i,j}S_{ij} = n^2$. Then $\|\bar{g}_J(A)\| = \Theta(J^{3/2} \log(1/\rho))$ and for any $0< \delta < 1$ satisfying $\delta J \log(1/\rho) = \Omega(1)$, we have
	\[
	\P (\|g_J(A) - \bar{g}_J(A)\| \ge \delta J^{3/2}\log(1/\rho)) \le J C_1 e^{- C_2 n^2 \rho^{2 J}}.
	\]
\end{thm}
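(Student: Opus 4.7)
My plan splits naturally into three parts: establishing the magnitude $\|\bar g_J(A)\|$, proving coordinate-wise concentration of $\log m_k(A)$ about $\log \E m_k(A)$ via Theorem~\ref{thm:conc}, and combining via a union bound. First I would show that $\E m_k(A) = \Theta(\rho^k)$ for each $k \in \{3,\ldots,J\}$ (with the boundary $\E m_2(A) = \Theta(\rho)$), by expanding
\[
\E\trace(A^k) = \sum_{i_1,\ldots,i_k} \rho^{e(\mathbf{i})} \prod_j S_{i_j i_{j+1}},
\]
where $e(\mathbf{i})$ is the number of distinct undirected edges in the closed walk $i_1\to i_2\to \cdots \to i_k\to i_1$ and we have used $A_{ij}^r = A_{ij}$. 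Under $\min S_{ij}=\Omega(1)$ and $\sum S_{ij}=n^2$, the non-degenerate walks (all $k$ edges distinct) contribute $\Theta((\rho n)^k)$; each edge repetition loses a combinatorial factor of $1/n$ and gains only $1/\rho$, so such walks are subdominant provided $n\rho=\Omega(1)$, which is implicit in the regime where the stated bound is nontrivial. Consequently $\log \E m_k(A) = -k\log(1/\rho) + O(1)$, and
\[
\|\bar g_J(A)\|^2 = \sum_{k=2}^J (\log \E m_k(A))^2 = \Theta\bigl(J^3 \log^2(1/\rho)\bigr),
\]
which settles the first claim.

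For the concentration, set $u := \delta J \log(1/\rho) = \Omega(1)$ and $\eta := 1 - e^{-u} = \Theta(1)$. Using the elementary inequality $2 - e^{-u} \le e^u$, on the event $E_k := \{|m_k(A) - \E m_k(A)|\le \eta \E m_k(A)\}$ we have $m_k(A)/\E m_k(A)\in[e^{-u},e^u]$, hence $|\log m_k(A) - \log \E m_k(A)| \le u$. Rewriting $E_k$ through $\psi_k(A) = (n/(k\sqrt 2))\, m_k(A)$ and applying Theorem~\ref{thm:conc} with $t = \eta\, n\, \E m_k(A)/(k\sqrt 2)$ gives
\[
\P(E_k^c) \le 4\exp\!\bigl(-\bigl(\eta n \E m_k(A)/(k\sqrt 2) - 4\sqrt 2\bigr)^2/16\bigr) \le 4\exp(-c\, n^2\rho^{2k}/k^2)
\]
for an absolute constant $c$, using $\E m_k(A) = \Theta(\rho^k)$ and $\eta = \Theta(1)$. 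Because $k\le J$ and $\rho\le 1$ imply $\rho^{2k}/k^2 \ge \rho^{2J}/J^2$, a union bound over $k = 2,\ldots,J$ yields $\P(\bigcup_k E_k^c) \le 4(J-1)\exp(-c\, n^2\rho^{2J}/J^2)$, and on $\bigcap_k E_k$,
\[
\|g_J(A) - \bar g_J(A)\|^2 = \sum_{k=2}^J (\log m_k(A) - \log \E m_k(A))^2 \le (J-1) u^2 \le \delta^2 J^3 \log^2(1/\rho),
\]
matching the claimed bound after absorbing the $1/J^2$ factor into $C_2$ and the $4$ into $C_1$.

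The principal technical difficulty lies in the first step, establishing $\E m_k(A) = \Theta(\rho^k)$ with implicit constants uniform in $k \le J$. The upper bound via walk enumeration requires carefully checking that walks with repeated edges — which save powers of $\rho$ but lose combinatorial factors of $1/n$ per repetition — remain subleading, and this is only true once $n\rho$ is sufficiently large (which is implicit in the regime where $n^2\rho^{2J}$ is nontrivial; otherwise the stated tail bound is vacuous). The matching lower bound is most cleanly obtained by combining $\lambda_1(\E A) = \Theta(\rho n)$ with eigenvalue stability, $\lambda_1(A) \ge \lambda_1(\E A) - \|A - \E A\|_{\mathrm{op}}$, using $\|A - \E A\|_{\mathrm{op}} = O(\sqrt{n\rho})$ for inhomogeneous Bernoulli matrices. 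Once these uniform estimates are in hand, the log-Lipschitz conversion and the union bound above are routine.
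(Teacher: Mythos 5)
Your second and third steps (coordinate-wise concentration of $\log m_k(A)$ via Theorem~\ref{thm:conc} after converting a relative deviation of $m_k$ into an additive deviation of $\log m_k$, allocating $u=\delta J\log(1/\rho)$ per coordinate, then a union bound over $k=2,\ldots,J$) are correct and are essentially the paper's argument; your symmetric event $E_k$ with $\eta=1-e^{-u}$ is just a cleaner packaging of the paper's two one-sided bounds with thresholds $(e^t-1)\mu_k$ and $(1-e^{-t})\mu_k$, and the residual $1/J^2$ in the exponent is present (implicitly) in the paper's proof as well.

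The genuine problem is your step 1. You make the theorem rest on the two-sided estimate $\E m_k(A)=\Theta(\rho^k)$ uniformly in $k\le J$, and you justify it by saying repeated-edge walks are subleading "provided $n\rho=\Omega(1)$". That claim is false in part of that range: for even $k\ge 4$, closed walks whose trace is a tree (e.g.\ $\ell=k/2$ distinct edges, $k/2+1$ vertices) contribute on the order of $n^{1-k/2}\rho^{k/2}$ to $\E m_k(A)$, which dominates $\rho^k$ unless $(n\rho)^{k/2}\gg n$; so you need $n\rho$ polynomially large (essentially the regime $n\rho^{J}\gg 1$ where the tail bound is nontrivial), and your write-up leaves this as the "principal technical difficulty". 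Your proposed lower bound via $\lambda_1(A)\ge\lambda_1(\E A)-\|A-\E A\|_{\mathrm{op}}$ is also shakier than you suggest: it is a high-probability statement about $\lambda_1(A)$ rather than a bound on $\E m_k(A)$, the operator-norm bound $O(\sqrt{n\rho})$ needs $n\rho\gtrsim\log n$, and for odd $k$ one must control the negative spectrum of $A$ (note $\E A=\rho S$ can have negative eigenvalues of order $n\rho$), so $\trace(A^k)\ge \lambda_1(A)^k$ is not available. The point is that none of this sharpness is needed: the only place the expectation enters the exponent is through a lower bound, and the paper's Proposition~\ref{prop:order_expectation} proves $\rho^k \preceq \E m_k(A) \preceq \rho^{k-1}$ by elementary walk counting — the lower bound is immediate from nonnegativity of the summands restricted to walks tracing a $k$-cycle, each with expectation at least $(\rho\min_{ij}S_{ij})^k$ — and this crude two-sided bound already gives $|\log\E m_k(A)|=\Theta(k\log(1/\rho))$, hence $\|\bar g_J(A)\|=\Theta(J^{3/2}\log(1/\rho))$, with no density assumption. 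If you replace your step 1 by that weaker estimate, the rest of your argument goes through unchanged; as written, however, your first step asserts something stronger than is true in the stated generality and props it up with a lemma that would not survive scrutiny.
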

We expect that $\bar{g}_J$ will be a good population level summary for many models. In general, it is hard to show an explicit separation result for $\bar{g}_J.$ However, in simple models, we can do explicit computations to show separation. For example, in a two parameter blockmodel $B = (p - q) I_m + q E_m$, with equal block sizes, we have $\E m_2(A) = (p/m + (m-1)q/m) (1 + o(1))$, $\E m_3(A) = (p^3/m^2 + (m-1)pq^2/m^2 + (m - 1)(m - 2)q^3/6m^2)(1 + o(1))$ and so on. Thus we see that if $m = 2$, then $\bar{g}_2$ should be able to distinguish between such blockmodels (i.e. different $p$, $q$).
\vskip10pt
\noindent
\textbf{Note:} After this paper was submitted to NIPS-2017, we came to know of a concurrent work \cite{maugis2017topology} that provides a topological/combinatorial perspective on the expected graph moments $\E m_k(A)$. Theorem 1 in \cite{maugis2017topology} shows that under some mild assumptions on the model (satisfied, for example, by generalized random graphs with bounded kernels as long as the average degree grows to infinity), $\E \trace(A^k) = \E$(\# of closed $k$-walks) will be asymptotic to $\E$(\# of closed $k$-walks that trace out a $k$-cycle) plus $\mathbf{1}_{\{k \text{ even}\}}\E$(\# of closed $k$-walks that trace out a $(k/2 +1)$-tree). For even $k$, if the degree grows fast enough $k$-cycles tend to dominate, whereas for sparser graphs trees tend to dominate. From this and our concentration results, we can expect \nclm{} to be able to tell apart graphs which are different in terms the counts of these simpler closed $k$-walks. Incidentally, the authors of \cite{maugis2017topology} also show that the expected count of closed non-backtracking walks of length $k$ is dominated by walks tracing out $k$-cycles. Thus if one uses counts of closed non-backtracking $k$-walks (i.e. moments of the non-backtracking matrix) instead of just closed $k$-walks as features, one would expect similar performance on denser networks, but in sparser settings it may lead to improvements because of the absence of the non-informative trees in lower order even moments.

\section{Simulation study and data analysis}
\label{sec:exp}

In this section, we describe the results of our experiments with simulated and real data to evaluate the performance of \ncge and \nclm. We measure performance in terms of clustering error which is the minimum normalized hamming distance between the estimated label vector and all $K!$ permutations of the true label assignment. Clustering accuracy is one minus clustering error.

\subsection{Node correspondence present}\label{subsec-samenode}

We provide two simulated data experiments\footnote{Code used in this paper is publicly available at \url{https://github.com/soumendu041/clustering-network-valued-data}.} for clustering networks with node correspondence. In each experiment twenty 150-node networks were generated from a mixture of two graphons, 13 networks from the first and the other 7 from the second. We also used a scalar multiplier with the graphons to ensure that the networks are not too dense. The average degree for all these experiments were around 20-25. We report the average error bars from a few random runs.

 First we generate a mixture of graphons from two blockmodels, with probability matrices $(p_i-q_i)I_m+q_iE_m$ with $i\in\{1,2\}$.
We use $p_2=p_1(1+\epsilon)$ and $q_2=q_1(1+\epsilon)$ and measure clustering accuracy as the multiplicative error $\epsilon$ is increased from $0.05$ to $0.15$. We compare \usvt, \nbs and \naive  and the results are summarized in Figure~\ref{fig:sim}(A). We have observed two things. First, \usvt and \nbs start distinguishing the graphons better as $\epsilon$ increases (as the theory suggests). Second, the naive approach does not do a good job even when $\epsilon$ increases.
\begin{figure}[htbp!]
\centering
\begin{tabular}{cc}
\includegraphics[width=.40\textwidth]{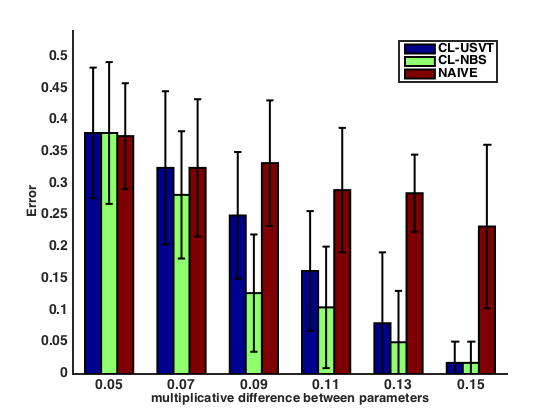}&\includegraphics[width=.40\textwidth]{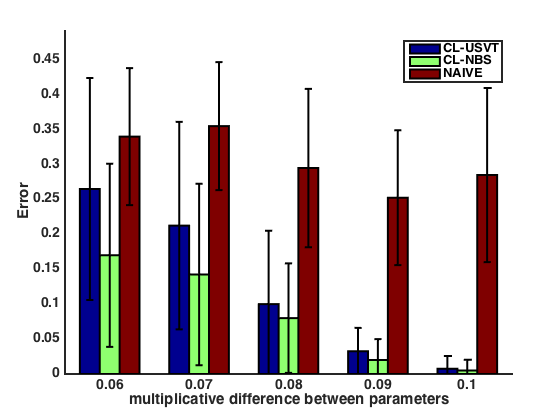}\\
(A)&(B)
\end{tabular}
\caption{We show the behavior of the three algorithms when $\epsilon$ increases, when the underlying network is generated from  (A) a blockmodel and (B) a smooth graphon.}
\label{fig:sim}
\end{figure}

In the second simulation, we generate the networks from two smooth graphons $\Pi_1$ and $\Pi_2$, where $\Pi_2=\Pi_1(1+\epsilon)$ (here $\Pi_1$ corresponds to the graphon 3 appearing in Table 1 of \cite{zhang_et_al}). As is seen from Figure~\ref{fig:sim}(B), here also \usvt and \nbs outperform the naive algorithm by a huge margin. Also, \nbs is consistently better than \usvt. This may have happened because we did our experiments on somewhat sparse networks, where USVT is known to struggle.

\subsection{Node correspondence absent}\label{subsec-diffnode}
We show the efficacy of our approach via two sets of experiments. We compare our log-moment based method \nclm with three other methods. The first is Graphlet Kernels~\cite{graphlet:09} with $3$, $4$ and $5$ graphlets, denoted by GK3, GK4 and GK5 respectively. In the second method, we use six different network-based statistics to summarize each graph; these statistics are the algebraic connectivity, the local and global clustering coefficients~\cite{newman2003structure}, the distance distribution~\cite{Mahadevan:2006} for $3$ hops, the Pearson correlation coefficient~\cite{Newman2002} and the rich-club metric~\cite{richclub04}. We also compare against graphs summarized by the top $J$ eigenvalues of $A/n$ (TopEig). These are detailed in the appendix, in Section~\ref{sec:details_graphstat}.
 
 For each distance matrix $\hat{D}$ we compute with \nclm, GraphStats and TopEig, we calculate a similarity matrix $\K=\exp(-t\hat{D})$ where $t$ is learned by picking the value within a range which maximizes the relative eigengap $(\lambda_K(\K)-\lambda_{K + 1}(\K))/\lambda_{K + 1}(\K)$. It would be interesting to have a data dependent range for $t$. We are currently working on cross-validating the range using the link prediction accuracy on held out edges.
 
For each matrix $\K$ we calculate the top $T$ eigenvectors, and do $K$-means on them to get the final clustering. We use $T = K$; however, as we will see later in this subsection, for GK3, GK4, and GK5 we had to use a smaller $T$ which boosted their clustering accuracy.

First we construct four sets of parameters for the two parameter blockmodel (also known as the planted partition model): $\Theta_1 = \{p = 0.1, q = 0.05, K = 2, \rho = 0.6\}$, $\Theta_2 = \{p = 0.1, q = 0.05, K = 2, \rho = 1\}$, $\Theta_3 = \{p = 0.1, q = 0.05, K = 8, \rho = 0.6\}$, and $\Theta_4 = \{p = 0.2, q = 0.1, K = 8, \rho = 0.6\}$. Note that the first two settings differ only in the density parameter $\rho$. The second two settings differ in the within and across cluster probabilities. The first two and second two differ in $K$. For each parameter setting, we generate two sets of 20 graphs, one with $n = 500$ and the other with $n = 1000$. 

For choosing $J$, we calculate the moments for a large $J$; compute a kernel similarity matrix for each choice of $J$ and report the one with largest relative eigengap between the $K^{th}$ and $(K+1)^{th}$ eigenvalue. In Figure~\ref{fig:tuning} we plot this measure of separation (= $(\lambda_\K- \lambda_{\K+1})/\lambda_{\K+1}$) against the value of $J$.
\begin{figure}[H]
\centering
\begin{tabular}{@{\hskip -4em}c@{\hskip -2em}c}
	\includegraphics[width=.5\textwidth]{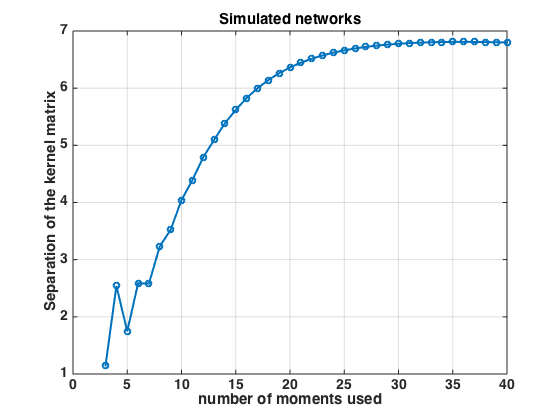}
\end{tabular}
\caption{Tuning for $J$ in the simulated networks.}
\label{fig:tuning}
\end{figure}
We see that the eigengap increases and levels off after a point. However, as $J$ increases, the computation time increases. We report the accuracy of $J = 5$, whereas $J = 8$ also returns the same in 48 seconds.
\begin{table}[htbp!]
	\footnotesize
	\centering
	\begin{tabular}{|c|cccccc|}
	\hline
	&\nclm ($J = 5$)& GK3 & GK4 & GK5 & GraphStats ($J = 6$) & TopEig ($J = 5$)\\
	\hline
	Error& \textbf{0} & 0.5 & 0.36 & 0.26 & 0.37 & 0.18  \\
	Time (s) & 25 & 14 & 16 & 38 & 94 & 8\\
	\hline
	\end{tabular}
	\caption{Error of 6 different methods on the simulated networks.}
	\label{table:sim-trace}
\end{table}
We see that \nclm performs the best. For GK3, GK4 and GK5, if one uses the top two eigenvectors , and clusters those into 4 clusters (since there are four parameter settings), the errors are respectively $0.08$, $0.025$ and $0.03$. This means that for clustering one needs to estimate the effective rank of the graphlet kernels as well. TopEig performs better than GraphStats, which has trouble separating out $\Theta_2$ and $\Theta_4$. 
\vskip10pt
\noindent
\textbf{Note:} Intuitively one would expect that, if there is node correspondence between the graphs, clustering based on graphon estimates would work better, because it aims to estimate the underlying probabilistic model for comparison. However, in our experiments we found that a properly tuned \nclm matched the performance of \ncge. This is probably because a properly tuned \nclm captures the global features that distinguish two graphons. We leave it for future work to compare their performance theoretically.

\subsection{Real Networks}
 We cluster about fifty real world networks. We use 11 co-authorship networks between 15,000 researchers from the High Energy Physics corpus of the arXiv, 11 co-authorship networks with 21,000 nodes from Citeseer (which had Machine Learning in their abstracts), 17 co-authorship networks (each with about 3000 nodes) from the NIPS conference and finally 10 Facebook ego networks\footnote{https://snap.stanford.edu/data/egonets-Facebook.html}. Each co-authorship network is dynamic, i.e. a node corresponds to an author in that corpus and this node index is preserved in the different networks over time. The ego networks are different in that sense, since each network is obtained by looking at the subgraph of Facebook induced by the neighbors of a given central or ``ego'' node. The sizes of these networks vary between 350 to 4000. 

\begin{table}[htbp!]
	\footnotesize
	\centering
	\begin{tabular}{|c|cccccc|}
		\hline
		&\nclm ($J=8$)& GK3 & GK4 & GK5 & GraphStats ($J = 8$) & TopEig ($J = 30$)\\
		\hline
		Error& 0.1 & 0.6 & 0.6   & 0.6 & 0.16 & 0.32  \\
		Time (s) & 2.7 & 45 & 50&60& 765 & 14\\
		\hline
	\end{tabular}
	\caption{Clustering error of 6 different methods on a collection of real world networks consisting of co-authorship networks from Citeseer, High Energy Physics (HEP-Th) corpus of arXiv, NIPS and ego networks from Facebook.}
	\label{table:real-trace}
\end{table}

Table~\ref{table:real-trace} summarizes the performance of different algorithms and their running time to compute distance between the graphs. We use the different sources of networks as labels, i.e. HEP-Th will be one cluster, etc. We explore different choices of $J$, and see that the best performance is from \nclm, with $J = 8$, followed closely by GraphStats. TopEig ($J$ in this case is where the eigenspectra of the larger networks have a knee) and the graph kernels do not perform very well. GraphStats take 765 seconds to complete, whereas \nclm finishes in 2.7 seconds. This is because the networks are large but extremely sparse, and so calculation of matrix powers is comparatively cheap.

\begin{figure}[htbp!]
\centering
\begin{tabular}{c}
	\includegraphics[width=.5\textwidth]{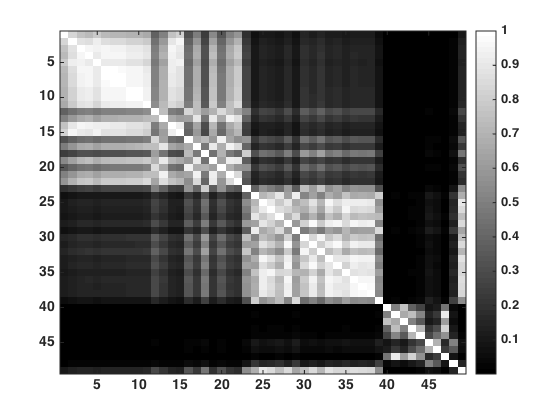}
\end{tabular}
\caption{Kernel matrix for \nclm on 49 real networks.}
\label{fig:hard_separate}
\end{figure}

In Figure~\ref{fig:hard_separate} we plot the kernel similarity matrix obtained using NCLM on the real networks (higher the value, more similar the points are). The first 11 networks are from HEP-Th, whereas the next 11 are from Citeseer. The next 16 are from NIPS and the remaining ones are the ego networks from Facebook. First note that \{HEP-Th, Citeseer\}, NIPS and Facebook are well separated. However, HEP-Th and Citeseer are hard to separate out. This is also verified by the bad performance of TopEig in separating out the first two (shown in Section~\ref{sec:exp}). However, in Figure~\ref{fig:hard_separate}, we can see that the Citeseer networks are different from HEP-Th in the sense that they are not as strongly connected inside as HEP-Th.

\section{Discussion}
\label{sec:disc}
We consider the problem of clustering network-valued data for two settings, both of which are prevalent in practice. In the first setting, different network objects have node correspondence. This includes clustering brain networks obtained from FMRI data where each node corresponds to a specific region in the brain, or co-authorship networks between a set of authors where the connections vary from one year to another. In the second setting, node correspondence is not present, e.g., when one wishes to compare different types of networks: co-authorship networks, Facebook ego networks, etc. One may be interested in seeing if co-authorship networks are more ``similar'' to each other than ego or friendship networks. 

We present two algorithms for these two settings based on a simple general theme: summarize a network into a possibly high dimensional feature vector and then cluster these feature vectors. In the first setting, we propose \ncge, where each network is represented using its graphon-estimate. We can use a variety of graphon estimation algorithms for this purpose. We show that if the graphon estimation is consistent, then \ncge can cluster networks generated from a finite mixture of graphons in a consistent way, if those graphons are sufficiently different. In the second setting, we propose to represent a network using an easy-to-compute summary statistic, namely the vector of the log-traces of the first few powers of a suitably normalized version of the adjacency matrix. We call this method \nclm and show that the summary statistic concentrates around its expectation, and argue that this expectation should be able to separate networks generated from different models. Using simulated and real data experiments we show that \ncge is vastly superior to the naive but often-used method of comparing adjacency matrices directly, and \nclm outperforms most computationally expensive alternatives for differentiating networks without node correspondence. In conclusion, we believe that these methods will provide practitioners with a powerful and computationally tractable tool for comparing network-structured data in a range of disciplines.

\subsubsection*{Acknowledgments}

We thank Professor Peter J. Bickel for helpful discussions. SSM was partially supported by NSF-FRG grant DMS-1160319 and a Lo\'{e}ve
Fellowship. PS was partially supported by NSF grant DMS 1713082. LL was partially supported by  NSF grants IIS  1663870, DMS 1654579 and a DARPA grant N-66001-17-1-4041.

\bibliographystyle{plain}

\bibliography{ref-LL.bib}

\appendix


\section{Proofs and related discussions}
\label{sec:proofs}

\subsection{\ncge}
\begin{prop}
	\label{prop-consist}
	We have
	\[
		\|\hat D - D\|_F^2 \le 4T \sum_i \|\hat P_i - P_i\|_F^2.
	\]
\end{prop}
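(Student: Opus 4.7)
The plan is to bound $\|\hat D - D\|_F^2 = \sum_{i,j}(\hat D_{ij} - D_{ij})^2$ entrywise and then sum.

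First I would apply the reverse triangle inequality for the Frobenius norm to a single entry. Since $\hat D_{ij} = \|\hat P_i - \hat P_j\|_F$ and $D_{ij} = \|P_i - P_j\|_F$, writing $\hat P_i - \hat P_j = (P_i - P_j) + (\hat P_i - P_i) - (\hat P_j - P_j)$ gives
\[
|\hat D_{ij} - D_{ij}| \le \|(\hat P_i - P_i) - (\hat P_j - P_j)\|_F.
\]
Squaring and using the elementary inequality $\|x - y\|_F^2 \le 2(\|x\|_F^2 + \|y\|_F^2)$ with $x = \hat P_i - P_i$ and $y = \hat P_j - P_j$ yields
\[
(\hat D_{ij} - D_{ij})^2 \le 2\|\hat P_i - P_i\|_F^2 + 2\|\hat P_j - P_j\|_F^2.
\]

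Next I would sum this bound over all pairs $(i,j) \in [T] \times [T]$. Each index $i$ appears in $T$ terms of the form $\|\hat P_i - P_i\|_F^2$ (once for each $j$), and similarly for $j$, so the right-hand side sums to $4T \sum_{i=1}^T \|\hat P_i - P_i\|_F^2$, giving exactly the claimed bound.

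There is no real obstacle here; the proof is essentially just the reverse triangle inequality combined with $(a+b)^2 \le 2(a^2+b^2)$ and a double counting step. The only thing worth noting is that the factor $4T$ (rather than $2T$) comes from summing over \emph{ordered} pairs, which matches the convention that $\hat D$ is a $T \times T$ symmetric matrix (including diagonal zeros) and that $\|\cdot\|_F$ is computed over all entries.
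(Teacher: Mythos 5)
Your proposal is correct and follows essentially the same argument as the paper: a (reverse) triangle inequality bound on each entry $|\hat D_{ij}-D_{ij}|$, the elementary inequality $\|x-y\|_F^2\le 2\|x\|_F^2+2\|y\|_F^2$, and a sum over ordered pairs producing the factor $4T$. The only cosmetic difference is that you square the intermediate bound $\|(\hat P_i-P_i)-(\hat P_j-P_j)\|_F$ directly, while the paper first splits it as $\|\hat P_i-P_i\|_F+\|\hat P_j-P_j\|_F$; this changes nothing substantive.
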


\begin{proof}
	The proof is straightforward. By triangle inequality we have
	\[
	|\hat D_{ij} - D_{ij}| = \left|\|\hat P_i - \hat P_j\|_F - \|P_i - P_j\|_F \right| \le \|\hat P_i - P_i\|_F + \|\hat P_j - P_j\|_F.
	\]
	Therefore
	\begin{align*}
	\|\hat D - D\|_F^2 = \sum_{i, j} |\hat D_{ij} - D_{ij}|^2 &\le \sum_{i, j} (\|\hat P_i - P_i\|_F + \|\hat P_j - P_j\|_F)^2\\
	& \le 2 \sum_{i, j} (\|\hat P_i - P_i\|_F^2 + \|\hat P_j - P_j\|_F^2) = 4T \sum_i \|\hat P_i - P_i\|_F^2.
	\end{align*}
\end{proof}

\begin{prop}[Davis-Kahan]\label{prop-davis-kahan}  Suppose $D$ has rank $K$. Let $V$ (resp. $\hat V$) be the $T \times K$ matrix whose columns correspond to the leading $K$ eigenvectors (corresponding to the $K$ largest-in-magnitude eigenvalues) of $D$ (resp. $\hat D$). Let $\gamma = \gamma(K,n,T)$ be the $K$-th smallest eigenvalue value of $D$ in magnitude. Then there exists an orthogonal matrix $\hat O$ such that
	\[
	\|\hat V \hat O - V\|_F \le \frac{4\|\hat D - D\|_F}{\gamma}.
	\]
\end{prop}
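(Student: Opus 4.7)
The plan is to invoke a Frobenius-norm version of the Davis--Kahan $\sin\Theta$ theorem and then convert the resulting subspace proximity bound into one involving an orthogonal Procrustes rotation.

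First I would set up the spectrum of $D$ carefully. Since $D$ is symmetric and has rank $K$, exactly $K$ of its eigenvalues are non-zero. Ordering them by decreasing magnitude, we have $|\lambda_1(D)|\ge \cdots \ge |\lambda_K(D)| = \gamma > 0 = |\lambda_{K+1}(D)| = \cdots = |\lambda_T(D)|$. Hence the columns of $V$ span the range of $D$, and the separation between the top-$K$-in-magnitude eigenvalues of $D$ and the remaining (zero) eigenvalues is exactly $\gamma$.

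Next I would apply the Davis--Kahan $\sin\Theta$ theorem in Frobenius form to the pair $(D,\hat D)$, taking the invariant subspace of $D$ corresponding to eigenvalues of magnitude at least $\gamma$. This gives
\[
\|\sin\Theta(\hat V, V)\|_F \;\le\; \frac{2\,\|\hat D - D\|_F}{\gamma}.
\]
Finally I would combine this with the standard orthogonal Procrustes inequality: for any $V,\hat V\in\R^{T\times K}$ with orthonormal columns there exists an orthogonal $\hat O\in\R^{K\times K}$, namely the orthogonal factor in the polar decomposition of $\hat V^\top V$, such that $\|\hat V\hat O - V\|_F\le \sqrt{2}\,\|\sin\Theta(\hat V, V)\|_F$. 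Chaining the two estimates yields $\|\hat V\hat O - V\|_F \le 2\sqrt{2}\,\|\hat D - D\|_F/\gamma \le 4\,\|\hat D - D\|_F/\gamma$, which is the stated bound.

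The main thing to watch is the eigenvalue convention. As the discussion following Corollary~\ref{cor:main_ncge} illustrates (where the two non-zero eigenvalues of $D$ come out to $-Tnd/2$), the matrix $D$ need not be positive semidefinite, so one cannot simply use the naive form of Davis--Kahan that separates eigenvalues on the real line into a ``top'' and ``bottom'' block with a gap in between. Instead one sorts the eigenvalues by absolute value and applies the version of Davis--Kahan that separates those lying outside a disk of radius $\gamma$ from those lying strictly inside (here, the zero eigenvalues). Once the correct formulation is quoted, the proof is essentially a one-line chain of inequalities; beyond picking that formulation there is no real obstacle.
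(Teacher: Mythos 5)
Your overall strategy (a Frobenius $\sin\Theta$ bound followed by an orthogonal Procrustes alignment) is the same family of argument as the paper's, but the step you lean on is not a theorem you can actually quote, and this is where the gap lies. The paper's proof handles the indefiniteness of $D$ by splitting its nonzero spectrum into the positive block $\lambda_1\ge\cdots\ge\lambda_u>0$ and the negative block $0>\lambda_v\ge\cdots\ge\lambda_T$, applying Theorem~2 of \cite{yu2015useful} separately to each \emph{contiguous-by-value} block (with gaps $\lambda_u$ and $-\lambda_v$ respectively), and stacking the two Procrustes rotations block-diagonally, which yields $8/\lambda_u^2+8/\lambda_v^2\le 16/\gamma^2$. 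Your proposal instead invokes a ``disk-separation'' Davis--Kahan bound $\|\sin\Theta(\hat V,V)\|_F\le 2\|\hat D-D\|_F/\gamma$, unconditionally and with the \emph{population} gap $\gamma$ in the denominator. No standard formulation gives this: the classical Davis--Kahan $\sin\Theta$ theorem applied to ``eigenvalues of magnitude $\ge$ something'' versus the zero eigenvalues puts the $K$-th largest-in-magnitude eigenvalue of the \emph{perturbed} matrix $\hat D$ in the denominator, so it only becomes $O(\|\hat D-D\|_F/\gamma)$ under a condition such as $\|\hat D-D\|_{op}\le\gamma/2$; and the Yu--Wang--Samworth variant, which is precisely the result that removes such conditions, is stated for a contiguous block $\lambda_r,\ldots,\lambda_s$ in the descending-value ordering, which the top-$K$-in-magnitude set of an indefinite $D$ is not (as your own observation about the eigenvalues $\pm Tnd/2$ in the two-graphon example shows, $D$ genuinely has eigenvalues of both signs).

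The gap is repairable, but only by doing essentially what the paper does: split into $V_\pm$ and $\hat V_\pm$, apply \cite{yu2015useful} to each block, and recombine (e.g.\ using $I-VV^T\preceq I-V_\pm V_\pm^T$ to control the combined angles). That route gives $\|\sin\Theta(\hat V,V)\|_F\le 2\sqrt{2}\,\|\hat D-D\|_F/\gamma$, not $2\|\hat D-D\|_F/\gamma$; combined with the $\sqrt{2}$ Procrustes factor this lands exactly on the stated constant $4$, so the proposition survives, but with no slack --- your bookkeeping ($2$, then ``$2\sqrt2\le 4$'') rests on the unsubstantiated constant. So the missing ingredient is precisely the positive/negative spectral splitting (or, alternatively, an explicit smallness condition on $\|\hat D-D\|$, which would weaken the statement); once you add it, your argument and the paper's essentially coincide, since Theorem~2 of \cite{yu2015useful} already packages the $\sin\Theta$ bound together with the Procrustes rotation.
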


\begin{proof}
	This follows from a slight variant of Davis-Kahan theorem that appears in \cite{yu2015useful}. Since $D$ is a Euclidean distance matrix of rank $K$, its eigenvalues must be of the form
	\[
	\lambda_1 \ge \cdots \ge \lambda_u > 0 = \cdots = 0 > \lambda_v \ge \cdots \ge \lambda_n,
	\]
	with $u + n - v + 1 = K$. Applying Theorem~2 of \cite{yu2015useful} with $r = 1$, $s = u$ we get that if $V_+$ denotes matrix whose columns are the eigenvectors of $D$ corresponding to $\lambda_1,\ldots,\lambda_u$ and $\hat V_+$ denotes the corresponding matrix for $\hat D$, then there exists an orthogonal matrix $\hat O_+$ such that
	\[
	\|\hat V_+\hat O_+ - V_+\|_F \le \frac{2\sqrt{2}\|\hat D - D\|_F}{\lambda_u}.
	\]
	Similarly, considering the eigenvalues $\lambda_v, \ldots, \lambda_n$, and applying Theorem~2 of \cite{yu2015useful} with $r = v$ and $s = n$ we get that
	\[
	\|\hat{V}_- \hat O_- - V_-\|_F \le \frac{2\sqrt{2}\|\hat D - D\|_F}{-\lambda_v},
	\]
	where $V_-$, $\hat V_-$ and $\hat O_-$ are the relevant matrices. Set $V = [V_+ : V_-]$, $\hat V = [\hat V_+ : \hat V_-]$ and $\hat O = \begin{pmatrix}
	\hat O_+ & 0 \\
	0 & \hat O_-
	\end{pmatrix}$. Then note that the columns of $V$ are eigenvectors of $D$ corresponding to its $K$ largest-in-magnitude eigenvalues, that $O$ is orthogonal and also that $\gamma = \min \{\lambda_u, - \lambda_v\}$. Thus
	\begin{align*}
	\|\hat V \hat O - V\|_F^2  &= \|\hat V_+ \hat O_+ - V_+\|_F^2 + \|\hat V_- \hat O_- - V_-\|_F^2\\
	&\le \frac{8 \|\hat D - D\|_F^2}{\lambda_u^2} + \frac{8 \|\hat D - D\|_F^2}{\lambda_v^2}\\
	&\le \frac{16 \|\hat D - D\|_F^2}{\gamma^2}.
	\end{align*}
\end{proof}

\begin{proof}[Proof of Theorem~\ref{thm:main_ncge}]
Follows immediately from Propositions~\ref{prop-consist}-\ref{prop-davis-kahan}.
\end{proof}

\begin{prop}\label{prop-evec-pop}{}
	Suppose there are $K$ underlying graphons, as in the graphon mixture model (\ref{eq-mixture}), and assume that in our sample there is at least one representative from each of these. Then $D$ has the form $Z\mathcal{D}Z^T$ where the $i$th row of the binary matrix $Z$ has a single one at position $l$ if network $A_i$ is sampled from $\Pi_l$, and $\mathcal{D}$ is the $K \times K$ matrix of distances between the $\Pi_l$. As a consequence $D$ is of rank $K$. Then there exists a $T \times K$ matrix $V$ whose columns are eigenvectors of $D$ corresponding to the $K$ nonzero eigenvalues, such that
	\[
	V_{i\star} = V_{j\star} \Leftrightarrow Z_{i\star} = Z_{j\star},
	\]
	so that knowing $V$, one can recover the clusters perfectly.
\end{prop}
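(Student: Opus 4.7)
The plan is to extract the structure of $D$ from its factorization, and then use a column-space correspondence to read off the row structure of $V$. First I would verify the factorization $D = Z\mathcal{D}Z^T$. If $A_i$ is sampled from $\Pi_{l(i)}$, then $Z_{i\star}$ is the $l(i)$-th standard basis vector, and a direct calculation gives $(Z\mathcal{D}Z^T)_{ij} = \mathcal{D}_{l(i),l(j)} = \|\Pi_{l(i)} - \Pi_{l(j)}\|_F = D_{ij}$.

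Next I would pin down the rank and column space of $D$. Since every cluster is represented in the sample, $Z$ has full column rank $K$. The matrix $\mathcal{D}$ is the (unsquared) Euclidean distance matrix of $K$ distinct points in the Frobenius-norm Hilbert space; a classical result of Schoenberg gives that such a matrix has signature $(1, K-1)$ when the points are distinct, and in particular is nonsingular. Using the fact that multiplying by a matrix of full column rank on the left and by a matrix of full row rank on the right preserves rank, we obtain $\mathrm{rank}(D) = \mathrm{rank}(Z\mathcal{D}Z^T) = \mathrm{rank}(\mathcal{D}) = K$. Moreover $\mathrm{col}(D) \subseteq \mathrm{col}(Z)$ trivially, and a dimension count forces equality.

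Finally, the columns of $V$ form an orthonormal basis for the $K$-dimensional column space of $D$, which coincides with $\mathrm{col}(Z)$. Hence there exists an invertible $W \in \R^{K \times K}$ with $V = ZW$, so $V_{i\star} = Z_{i\star} W$ is simply the $l(i)$-th row of $W$. Since $W$ is invertible its rows are distinct, yielding $V_{i\star} = V_{j\star} \Leftrightarrow l(i) = l(j) \Leftrightarrow Z_{i\star} = Z_{j\star}$; in particular, applying any clustering procedure that returns the rows' equivalence classes recovers the partition exactly. The only delicate point is the nonsingularity of $\mathcal{D}$: one can either appeal to the Schoenberg result noted above, impose distinctness of the $\Pi_l$'s as a mild genericity condition, or bypass the rank claim altogether by replacing $K$ with the actual rank of $\mathcal{D}$ throughout, noting that the same linear-algebraic argument still resolves the row-equivalence classes of $Z$.
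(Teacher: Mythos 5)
Your proof is correct, and it takes a somewhat different route from the paper's. The paper argues constructively: it notes $Z^TZ$ is positive definite, takes the spectral decomposition $(Z^TZ)^{1/2}\mathcal{D}(Z^TZ)^{1/2} = U\Delta U^T$, and sets $V = Z(Z^TZ)^{-1/2}U$, which one checks consists of eigenvectors of $D = Z\mathcal{D}Z^T$ and has the form $ZW$ with $W = (Z^TZ)^{-1/2}U$ invertible (this is the standard device from the spectral-clustering literature, e.g.\ Rohe et al.). You instead argue abstractly: $\mathrm{rank}(D) = K$ and $\mathrm{col}(D) = \mathrm{col}(Z)$, so \emph{any} basis of eigenvectors for the nonzero eigenvalues can be written as $V = ZW$ with $W$ invertible, and then both arguments finish identically (rows of $ZW$ with $W$ invertible coincide exactly when the corresponding rows of $Z$ do). Your version buys two things: it proves the conclusion for every admissible choice of $V$ rather than exhibiting one, which is what is actually used downstream when $V$ is taken to be the leading eigenvector matrix in Theorem~\ref{thm:main_ncge}; and it supplies a justification for the rank-$K$ claim, via the classical fact that the (unsquared) Euclidean distance matrix of $K$ distinct points has one positive and $K-1$ negative eigenvalues and is therefore nonsingular. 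The paper asserts the rank claim without proof, and its construction tacitly relies on it (if $\mathcal{D}$ were singular, $(Z^TZ)^{1/2}\mathcal{D}(Z^TZ)^{1/2}$ would have zero eigenvalues and the constructed columns would not all correspond to nonzero eigenvalues of $D$). Your observation that this requires the $\Pi_l$ to be distinct, and your fallback of replacing $K$ by $\mathrm{rank}(\mathcal{D})$, are both apt; distinctness of the $\Pi_l$ is an implicit assumption of the mixture model.
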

\begin{proof}
	The proof is standard. Note that $Z^TZ$ is positive definite. Consider the matrix $(Z^TZ)^{1/2}B(Z^TZ)^{1/2}$ and let $U\Delta U^T$ be its spectral decomposition. Then the matrix $V = Z(Z^TZ)^{-1/2}U$ has the required properties.
\end{proof}
\vskip10pt
\noindent
\textbf{USVT:} Theorem 2.7 of \cite{chatterjee2015} tells us that if the underlying graphons are Lipschitz then
$
\E \frac{\| \hat P_i - P_i \|_F^2}{n^2} \le C_i n^{-1/3},
$
where the constant $C_i$ depends only on the Lipschitz constant of the underlying graphon $f_i$. So, the condition (\ref{eq-cond}) of Corollary~\ref{cor:main_ncge} is satisfied with $\alpha = 1/3$, $\beta = 0$. 
\vskip10pt
\noindent
\textbf{Neighborhood smoothing:} The authors of \cite{zhang_et_al} work with a class $\mathcal{F}_{\delta,L}$ of piecewise Lipschitz graphons, see Definition 2.1 of their paper. The proof of Theorem 2.2 of \cite{zhang_et_al} reveals that if $f_i \in \mathcal{F}_{\delta_i, L_i}$, then there exist a global constant $C$ and constants $C_i \equiv C_i(L_i)$, such that for all $n \ge N_i \equiv N_i(\delta_i)$, with probability at least $1 - n^{-C}$, we have
$
\frac{\| \hat P_i - P_i \|_F^2}{n^2} \le C_i\sqrt{\frac{\log n}{n}}.
$
Thus the condition (\ref{eq-cond}) of Corollary~\ref{cor:main_ncge} is satisfied with $\alpha = \beta = 1/2$, for all $n \ge N_T := \max_{1\le i \le T} N_i$.

\begin{rem}\label{rem-no-T}
	In the case of the graphon mixture model (\ref{eq-mixture}), the constants $C_T$ and $N_T$  will be free of $T$ as there are only $K$ (fixed) underlying graphons. Also, if each $f_i \in \mathcal{F}_{\delta, L_i}$, then $N_T$ will not depend on $T$ and if the Lipschitz constants are the same for each graphon, then $C_T$ will not depend on $T$.
\end{rem}

\begin{rem}
	There are combinatorial algorithms that can achieve the minimax rate, $n^{-1}\log n$, of graphon estimation \cite{gao2015}. Albeit impractical, these algorithms can be used to achieve the optimal bound of $4 C_T n^{-1} \log n$ in Proposition~\ref{prop-consist}.
\end{rem}

\begin{rem}
	We do not expect \naive to perform well simply because $A$ is not a good estimate of $P$ in Frobenius norm in general. Indeed,
	\[
	\frac{1}{n^2}\E\|A - P\|_F^2 = \frac{1}{n^2}\sum_{i,j}\E (A_{ij} - P_{ij})^2 = \frac{1}{n^2}\sum_{i\ne j} P_{ij} (1 - P_{ij}) + \frac{1}{n^2}\sum_{i} P_{ii}^2 \le \frac{1}{4}(1 + o(1)),
	\]
	with equality, for example, when each $P_{ij} = \frac{1}{2} + o(1)$. 
\end{rem}

\subsection{\nclm}
\begin{prop}[Lipschitz continuity]
	Suppose $A$ and $A+U$ are matrices with entries in $[-1,1]$, then
	\begin{enumerate}
		\item $|\trace((A+U)_+^k)-\trace(A_+^k)| \le kn^{k-1}\|U\|_F$
		\item $|\trace((A+U)_-^k)-\trace(A_-^k)| \le kn^{k-1}\|U\|_F$,
	\end{enumerate}
	i.e. the map $\phi_{+,k}: S_{[-1,1]}^{n\times n} \rightarrow [0,\infty)$ defined on the space $S_{[-1,1]}^{n \times n}$ of symmetric matrices with entries in $[-1,1]$ by $\phi_{+,k}(A) = \trace(A_+^k)$ is Lipschitz with constant $kn^{k-1}$. Note that this implies that the map $\tilde{\phi}_{+,k}:[0,1]^{n(n-1)/2} \rightarrow [0,\infty)$ defined by
	\[
	\tilde{\phi}_{+,k}((a_{ij})_{1\le i<j\le n}) = \trace(A_+^k)
	\]
	where $A_{ij} = A_{ji} = a_{ij}$, for $1\le i < j \le n$ and $A_{ii}=0$, is Lipschitz with constant $\sqrt{2}kn^{k-1}$. The same statements hold for analogously defined $\phi_{-,k}$ and $\tilde{\phi}_{-,k}$.
\end{prop}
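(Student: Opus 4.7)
The plan is to treat $\phi_{+,k}$ as a spectral function $\phi_{+,k}(A) = \sum_{i=1}^n f(\lambda_i(A))$ with $f(x) = [x]_+^k$, and to bound its Lipschitz constant by controlling the Frobenius norm of its gradient along the line segment $A + tU$, $t \in [0,1]$ (which remains in $S_{[-1,1]}^{n\times n}$ by convexity of that set).

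First I would observe that for $k \ge 2$, $f$ is $C^1$ on $\R$ with $f'(x) = k[x]_+^{k-1}$, so by the standard calculus of spectral functions (Davis/Lewis), $\phi_{+,k}$ is $C^1$ on symmetric matrices with gradient (in the trace inner product) $\nabla \phi_{+,k}(A) = k A_+^{k-1}$, where $A_+^{k-1}$ is defined spectrally: same eigenvectors as $A$, eigenvalues $[\lambda_i(A)]_+^{k-1}$. Setting $h(t) := \phi_{+,k}(A + tU)$, Cauchy–Schwarz then gives
\[
|h'(t)| \;=\; k\,|\trace((A+tU)_+^{k-1} U)| \;\le\; k\,\|(A+tU)_+^{k-1}\|_F\,\|U\|_F,
\]
so it remains to bound $\|(A+tU)_+^{k-1}\|_F$ uniformly along the segment.

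Next I would prove the uniform estimate $\|B_+^{k-1}\|_F \le n^{k-1}$ for every $B \in S_{[-1,1]}^{n\times n}$. Entries in $[-1,1]$ give $\|B\|_F^2 \le n^2$, whence $\|B_+\|_F \le \|B\|_F \le n$ and $\|B_+\|_{\mathrm{op}} \le \|B\|_{\mathrm{op}} \le \|B\|_F \le n$. For $k \ge 2$ this yields
\[
\|B_+^{k-1}\|_F^2 \;=\; \sum_i [\lambda_i(B)]_+^{2(k-1)} \;\le\; \|B_+\|_{\mathrm{op}}^{2(k-2)} \sum_i [\lambda_i(B)]_+^2 \;\le\; n^{2(k-2)} \cdot n^2 \;=\; n^{2k-2}.
\]
Integrating $|h'(t)| \le k n^{k-1} \|U\|_F$ from $0$ to $1$ gives the first inequality. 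The bound for $\phi_{-,k}$ then follows from the identity $A_- = (-A)_+$, so $\phi_{-,k}(A) = \phi_{+,k}(-A)$ and the result above applies to $-A$ and $-(A+U)$. For $\tilde\phi_{+,k}$, the embedding $u \mapsto A$ with $A_{ij} = A_{ji} = a_{ij}$ and $A_{ii} = 0$ satisfies $\|A - A'\|_F^2 = 2\|u - u'\|_2^2$, which inflates the Lipschitz constant by $\sqrt 2$.

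The hard part will be justifying the gradient formula at matrices with a zero eigenvalue, since $A \mapsto A_+$ itself is not differentiable there. This is saved for $k \ge 2$ by the fact that $f \in C^1$ with $f'(0) = 0$, which is enough for the spectral function $\phi_{+,k}$ to be $C^1$. A more elementary alternative --- combining Hoffman–Wielandt, $\sum_i (\lambda_i(A) - \lambda_i(A+U))^2 \le \|U\|_F^2$, with a mean-value bound on $f$ --- also works but requires care to achieve the sharp constant $k n^{k-1}$ rather than the loose $k n^{k-1/2}\|U\|_F$ produced by the naive estimate $\|f'\|_\infty \le k n^{k-1}$ on $[-n,n]$.
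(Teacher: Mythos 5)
Your proof is correct, but it takes a genuinely different route from the paper's. You differentiate the composed map $A \mapsto \trace(A_+^k)$ directly as a spectral function, invoking the Davis--Lewis calculus to get $\nabla \phi_{+,k}(A) = kA_+^{k-1}$ (which is where you rightly need $k \ge 2$, so that $x \mapsto [x]_+^k$ is $C^1$ with vanishing derivative at $0$), then bound $\|(A+tU)_+^{k-1}\|_F \le n^{k-1}$ along the segment and integrate. The paper instead splits the composition: it shows by elementary chain-rule calculus that the polynomial map $B \mapsto \trace(B^k)$ has gradient of Frobenius norm at most $\sqrt{2}\,k\,\|B^{k-1}\|_F \le \sqrt{2}\,k\,n^{k-1}$ and hence is $kn^{k-1}$-Lipschitz on matrices with $\|B\|_F \le n$, and then composes with the positive-part map $A \mapsto A_+$, which is the metric projection onto the PSD cone and therefore non-expansive; this avoids ever differentiating through the non-smooth map $A \mapsto A_+$. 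What each approach buys: the paper's argument is more elementary (no spectral-function machinery) and formally makes no case distinction in $k$, while yours handles the composite in one clean step and, as a by-product, avoids the paper's small mismatch of applying a Lipschitz bound phrased for zero-diagonal parameterizations $(a_{ij})_{i<j}$ to the matrices $A_+$ and $(A+U)_+$, which have nonzero diagonals. Your restriction to $k \ge 2$ is harmless here (it is the only regime used, since $m_1(A)=0$), and your closing observations --- the $\sqrt{2}$ bookkeeping for $\tilde{\phi}_{+,k}$ and the identity $A_- = (-A)_+$ for part 2 --- match the paper exactly.
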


\begin{proof}
	Let $\lambda_1\geq \lambda_2\dots \geq \lambda_n$ be the ordered eigenvalues of $A+U$, whereas $\nu_1\geq \nu_2\dots\geq \nu_n$ be the ordered eigenvalues of $A$. Let $\sigma_1\geq\dots\geq \sigma_n$ be the ordered eigenvalues 
	of $U$. First note that since these matrices have entries in $[-1,1]$, their Frobenius norm is at most $n$. Thus all their eigenvalues are in $[-n,n]$.

	We now compute the derivative of $\trace A^k$ with respect to a particular variable $A_{ij}$. We claim that
	\[
	\frac{d}{dA_{ij}} \trace A^k = 2k A^{k-1}_{ij}.
	\]
	To do this we shall work with the linear map interpretation of derivative (in this case multiplication by a number).  First consider the map $f:\R^{n\times n} \rightarrow \R^{n \times n}$ defined as $f(A) = A^k$. Then consider the map $g:\R^{n\times n} \rightarrow \R$ defined by $g(A) = \trace(A)$. Finally consider the map $h:\R \rightarrow \R^{n\times n}$ defined as $h(x) = A + (x - A_{ij}) e_i e_j^T + (x - A_{ij}) e_j e_i^T $. Now we view the function $\trace A^k$ for $A$ symmetric as a function of $A_{ij}$ as $ g \circ f \circ h (A_{ij})$. Therefore, by chain rule
	\[
	\frac{d}{dA_{ij}} \trace A^k (u) = D_{f\circ h(A_{ij})} g \circ D_{h(A_{ij})} f \circ D_{A_{ij}}h(u).
	\]
	Now $g$ is a linear function. Therefore $D_Ag = g$. On the other hand, it is easy to see that $D_{A_{ij}}h (u) = u e_i e_j^T + u e_j e_i^T$. Finally $f$ can be viewed as the composition of two maps $\alpha (A_1,\ldots,A_k) = A_1 A_2 \cdots A_k$ and $\beta(A) = (A,\ldots,A)$. Notice that $D_{(A_1,\ldots,A_k)}\alpha(H_1,\ldots,H_k) = H_1 A_2 \cdots A_k + A_1 H_2 \cdots A_k + \ldots + A_1 A_2\cdots H_k)$, and $D_A\beta = \beta$. Thus
	\[
	D_Af(H) = D_{\beta{(A)}}\alpha \circ D_A\beta(H) = D_{\beta{(A)}}\alpha (H,\ldots,H) = HA^{k-1} + \cdots + H A^{k-1} = kH A^{k-1}.
	\]
	Therefore
	\[
	\frac{d}{dA_{ij}} \trace A^k (u) = g \circ D_{h(A_{ij})} f (u e_i e_j^T + u e_j e_i^T).
	\]
	Noting that $h(A_{ij}) = A$ we get
	\[
	\frac{d}{dA_{ij}} \trace A^k (u) = g (k(u e_i e_j^T + u e_j e_i^T)A^{k-1}) = 2k A^{k-1}_{ij}u. 
	\]
	Therefore the map $\tilde{\phi}_k:\R^{n(n-1)/2} \rightarrow \R$ defined by $\tilde{\phi}_k((A_{ij})_{i<j}) \equiv \tilde{\phi}_k(A) =  \trace(A^k)$ has gradient $2k(A^{k-1}_{ij})_{i<j}.$
	
	Therefore
	\[
	|\tilde{\phi}_k(A) - \tilde{\phi}_k(B)| \le \|\nabla \tilde{\phi}_k\|_2 \|(A_{ij}) - (B_{ij})\|_2.
	\]
	But $\|\nabla \tilde{\phi}_k\|_2 = \sqrt{2}k\|A^{k-1}\|_F \le \sqrt{2}k n^{k-1}$ (by repeated application of the inequality $\|XY\|_F \le \|X\|_F \|Y\|_{op}$ and noting that $\|A\|_F \le n$), and $\|(A_{ij}) - (B_{ij})\|_2 = \|A - B\|_F/{\sqrt{2}}$. Therefore
	\[
	|\tilde{\phi}_k(A) - \tilde{\phi}_k(B)| \le k n^{k-1} \|A - B\|_F.
	\]
	Also as before
	\begin{align*}
	|\trace (A+U)_+^k - \trace A_+^k| &= |\tilde{\phi}_k((A+U)_+) - \tilde{\phi}_k(A_+)|\\
	&\le kn^{k-1}\|(A+U)_+ - A_+\|_F\\
	&\le kn^{k-1}\|U\|_F,
	\end{align*}
	where in the last step we have used the fact that $A \mapsto A_{+}$ is projection onto the PSD cone and hence non-expansive (i.e. $1$-Lipschitz). Part 2. may now be obtained easily by noting that $A_- = (-A)_+.$
\end{proof}

\begin{prop}[Convexity]
	The functions $\phi_{\pm,k}$ and $\tilde{\phi}_{\pm,k}$ are convex on their respective domains.
\end{prop}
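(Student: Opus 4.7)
\medskip

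\noindent\textbf{Proof plan.} The plan is to identify both $\phi_{+,k}$ and $\phi_{-,k}$ as spectral trace functions of the form $A \mapsto \operatorname{trace}(f(A))$ for a suitably chosen convex $f:\mathbb{R}\to\mathbb{R}$, and then invoke the classical fact that such trace-functions are convex on the space of symmetric matrices. The reduction to $\tilde{\phi}_{\pm,k}$ is immediate since the map $(a_{ij})_{i<j} \mapsto A$ (with $A_{ii}=0$) is affine.

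\medskip

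\noindent\textbf{Step 1: Rewrite as a spectral trace function.} If $A = \sum_i \lambda_i u_i u_i^T$ is the spectral decomposition, then by definition $A_+ = \sum_i (\lambda_i)_+ u_i u_i^T$, so $\operatorname{trace}(A_+^k) = \sum_i (\lambda_i)_+^k = \operatorname{trace}(f_+(A))$ where $f_+(x) = (\max(x,0))^k$. Similarly $\operatorname{trace}(A_-^k) = \operatorname{trace}(f_-(A))$ with $f_-(x) = (\max(-x,0))^k$, since $A_- = (-A)_+$. One checks directly that $f_\pm$ are convex on all of $\mathbb{R}$: for $k=1$ these are the ReLU and its reflection, and for $k\ge 2$ each $f_\pm$ is $C^1$ with nonnegative second derivative away from $0$.

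\medskip

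\noindent\textbf{Step 2: Trace-convexity.} The key fact I would invoke is that, whenever $f:\mathbb{R}\to\mathbb{R}$ is convex, the map $A \mapsto \operatorname{trace}(f(A))$ is convex on the space of symmetric matrices (see, e.g., Bhatia, \emph{Matrix Analysis}, Ch.~II). For completeness, the short proof goes as follows. Given symmetric $A,B$ and $t\in[0,1]$, let $C = tA + (1-t)B = \sum_i \mu_i w_i w_i^T$ be a spectral decomposition of $C$. Then
\begin{align*}
\operatorname{trace}(f(C)) &= \sum_i f(\mu_i) = \sum_i f\bigl(\langle w_i, C w_i\rangle\bigr)\\
&= \sum_i f\bigl(t\langle w_i, A w_i\rangle + (1-t)\langle w_i, B w_i\rangle\bigr)\\
&\le t\sum_i f\bigl(\langle w_i, A w_i\rangle\bigr) + (1-t)\sum_i f\bigl(\langle w_i, B w_i\rangle\bigr),
\end{align*}
by the convexity of $f$. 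Peierls' inequality, which says that $\sum_i f(\langle v_i, M v_i\rangle) \le \operatorname{trace}(f(M))$ for any orthonormal basis $\{v_i\}$ and any symmetric $M$ (a direct consequence of the Schur--Horn majorization of diagonal entries by eigenvalues, combined with the fact that convex $f$-sums respect majorization), then yields
\[
\operatorname{trace}(f(C)) \le t\operatorname{trace}(f(A)) + (1-t)\operatorname{trace}(f(B)),
\]
which is the desired convexity. Applied to $f=f_\pm$, this proves convexity of $\phi_{\pm,k}$ on $S^{n\times n}_{[-1,1]}$.

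\medskip

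\noindent\textbf{Step 3: From $\phi_{\pm,k}$ to $\tilde\phi_{\pm,k}$.} The map $L:[0,1]^{n(n-1)/2}\to S^{n\times n}_{[-1,1]}$ sending $(a_{ij})_{i<j}$ to the symmetric matrix with those off-diagonal entries and zero diagonal is affine (indeed linear). Since $\tilde\phi_{\pm,k} = \phi_{\pm,k}\circ L$, the convexity of $\tilde\phi_{\pm,k}$ on $[0,1]^{n(n-1)/2}$ follows immediately.

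\medskip

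\noindent\textbf{Main obstacle.} The only nontrivial ingredient is the trace-convexity statement in Step~2. It can be either quoted from a standard reference or, as sketched above, derived in a few lines from Peierls' inequality. The rest is bookkeeping: verifying that $f_\pm$ are genuinely convex (which is essentially immediate) and noting that the convexity passes from symmetric matrices to the vectorization via an affine map.
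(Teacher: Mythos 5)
Your proof is correct and takes essentially the same route as the paper: both identify $\phi_{\pm,k}(A)=\trace f_{\pm}(A)$ with $f_{\pm}(x)=(\pm x)_+^k$ convex, invoke the standard convexity of $A\mapsto\trace f(A)$ for convex scalar $f$, and pass to $\tilde\phi_{\pm,k}$ via the (affine) symmetrization map. The only difference is that the paper cites this trace-convexity result (Theorem 2.10 of Carlen's notes) while you supply a short self-contained derivation via Peierls' inequality, which is a fine substitute.
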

\begin{proof}
	We recall the standard result that if a continuous map $t \mapsto f(t)$ is convex, so is $A\mapsto \trace f(A)$ on the space of Hermitian matrices, and it is strictly convex if f is strictly convex (See, for example, Theorem 2.10 of \cite{carlen2010trace}). To use this we note that $x\mapsto x_+^k$ is continuous and convex, and so is $x\mapsto x_-^k$. This establishes convexity of $\phi_{\pm,k}$. Convexity of $\tilde{\phi}_{\pm,k}$ is an immediate consequence.
\end{proof}

\begin{proof}[Proof of Theorem~\ref{thm:conc}]
	The idea is to use Talagrand's concentration inequality for convex-Lipschitz functions (cf. \cite{boucheron2013concentration}, Theorem 7.12).
	First note that for $k$ even, we have
	\[
	\psi_k(A) = \psi_{+,k}(A) + \psi_{-,k}(A)
	\]
	and for $k$ odd
	\[
	\psi_k(A) = \psi_{+,k}(A) - \psi_{-,k}(A),
	\]
	where
	\[
	\psi_{\pm,k}(A) = \frac{1}{\sqrt{2}kn^{k - 1}}\tilde{\phi}_{\pm, k}(A).
	\]
	Viewed as a map from $[0,1]^{n(n-1)/2}$ to $[0,\infty)$, both $\psi_{\pm,k}$ are convex, 1-Lipschitz. Therefore, by Talagrand's inequality,
	
	\[
	\P(|\psi_{\pm,k}(A) - \mathbb{M}\psi_{\pm,k}(A)| > t) \le 2\exp(-t^2/4),
	\]
	where $\mathbb{M}\psi_{\pm,k}(A)$ is a median of $\psi_{\pm,k}(A)$. By Exercise 2.2 of \cite{boucheron2013concentration}
	\[
	|\mathbb{M}\psi_{\pm,k}(A) - \E \psi_{\pm,k}(A)| \le 2\sqrt{2},
	\]
	which implies that
	\[
	\P(|\psi_{\pm,k}(A) - \E \psi_{\pm,k}(A)| > t) \le 2\exp(-(t-2\sqrt{2})^2/4).
	\]
	Therefore
	\begin{align*}
	\P(|\psi_k(A) - \E \psi_k(A)|>t) &\le \P(|\psi_{+,k}(A) - \E \psi_{+,k}(A)| > t/2) + \P(|\psi_{-,k}(A) - \E \psi_{-,k}(A)| > t/2)\\
	&\le 4 \exp(-(t-4\sqrt{2})^2/16).
	\end{align*}

	\end{proof}

	\begin{prop}[Order of expectation]\label{prop:order_expectation}
		Let $\E A = P = \rho S$, where $\rho \in (0, 1)$, $\min_{i,j}S_{ij} = \Omega(1)$, and $\sum_{i,j}S_{ij} = n^2$. Then
		\[
		\rho^{k} \preceq \ \E m_k(A) \preceq \rho^{k - 1}.
		\]
	\end{prop}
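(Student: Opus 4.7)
The plan is to begin from the walk-sum expansion
\begin{equation*}
  \E m_k(A) = \frac{1}{n^k}\sum_{i_1,\ldots,i_k} \E\bigl[A_{i_1 i_2}A_{i_2 i_3}\cdots A_{i_k i_1}\bigr]
  = \frac{1}{n^k}\sum_{\text{closed $k$-walks } w}\ \prod_{e\in E(w)} P_e,
\end{equation*}
where the inner product runs over the \emph{distinct} edges traversed by $w$. This factorisation uses independence of the $\{A_{ij}\}_{i<j}$ together with the Bernoulli identity $A_{ij}^m=A_{ij}$. I would then treat the two inequalities separately.

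For the lower bound $\E m_k(A)\succeq\rho^k$, I would restrict the sum to those walks whose indices $i_1,\ldots,i_k$ are pairwise distinct. Any such walk uses exactly $k$ distinct edges, each with probability $P_e=\rho S_e\ge c\rho$ by the hypothesis $\min S_{ij}=\Omega(1)$, so every summand is $\ge (c\rho)^k$. The number of such ordered tuples is $n(n-1)\cdots(n-k+1)=(1-o(1))n^k$, and dividing by $n^k$ yields $\E m_k(A)\ge(1-o(1))c^k\rho^k$, as required.

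For the upper bound $\E m_k(A)\preceq\rho^{k-1}$ I would group the walks by the isomorphism class of their underlying (unlabelled) simple graph, indexed by $v$ (number of distinct vertices) and $r$ (number of distinct edges). Taking $\|S\|_\infty\le C$ for a constant (an implicit regularity assumption consistent with the stated hypotheses $\min S=\Omega(1)$ and $\sum S=n^2$), each summand is bounded by $(C\rho)^r$, and the number of labelled walks of a given shape is at most $n^v$ times a purely combinatorial, $k$-dependent constant. Hence it suffices to show $n^{v-k}\rho^r=O(\rho^{k-1})$ uniformly over the finitely many shapes that can arise.

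This reduces to a brief case analysis based on the fact that a closed $k$-walk is connected, so $r\ge v-1$. If the underlying graph is a tree ($r=v-1$), each edge must be traversed an even number of times, forcing $k\ge 2(v-1)$ and so $v\le k/2+1$; substitution gives a maximal tree contribution of $n^{1-k/2}\rho^{k/2}=\rho^{k-1}(n\rho)^{1-k/2}\le\rho^{k-1}$ whenever $n\rho=\Omega(1)$. If the graph contains a cycle ($r\ge v$) then $\rho\le 1$ gives $(C\rho)^r\le(C\rho)^v$, and the $v=r=k$ case (a genuine $k$-cycle) yields the worst-case $(C\rho)^k\le C^k\rho^{k-1}$, while smaller $v$ carries an extra $n^{v-k}\le 1$ factor that only helps. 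The main obstacle is exactly this case analysis---in particular verifying that the tree contribution $n^{1-k/2}\rho^{k/2}$, which in the sparse regime actually dominates the $k$-cycle term $\rho^k$, does not exceed $\rho^{k-1}$; this is precisely where the mild assumption $n\rho=\Omega(1)$ enters, and that is anyway forced by the non-triviality hypothesis $n^2\rho^{2J}\to\infty$ used in Theorem~\ref{thm:conc_of_feature}.
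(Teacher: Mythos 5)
Your proposal is correct and follows the same core strategy as the paper's proof: expand $\E\,\trace(A^k)$ as a sum over closed $k$-walks, bound each term by $(C\rho)^{\ell}$ with $\ell$ the number of distinct edges traversed, get the lower bound from the $\asymp n^k$ walks with all vertices distinct, and get the upper bound by counting walks according to how many distinct edges/vertices they use. The difference is in the upper-bound bookkeeping: the paper simply bounds the number of weight-$\ell$ sequences by $n^{\ell+1}$ and sums the resulting geometric series $\sum_{\ell\le k-1} n^{\ell+1}P_{\#}^{\ell}+n^kP_{\#}^k\le Cn^kP_{\#}^{k-1}$, whereas you classify walks by the isomorphism type $(v,r)$ of the traced subgraph and split into the tree case ($r=v-1$, forcing $v\le k/2+1$ by the even-crossing argument) and the cyclic case ($r\ge v$). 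Both routes rest on the same two unstated regularity conditions, and making them explicit is the main added value of your write-up: one needs $\max_{ij}S_{ij}=O(1)$ to replace $P_{\#}$ by $C\rho$ (the paper does this silently in its last display), and one needs $n\rho$ bounded below, which in the paper is hidden in the division by $nP_{\#}-1$ in the geometric-sum step and in your argument appears transparently as the condition under which the dominant tree term $n^{1-k/2}\rho^{k/2}=\rho^{k-1}(n\rho)^{1-k/2}$ stays below $\rho^{k-1}$; as you note, this is harmless since the regime where Theorem~\ref{thm:conc_of_feature} is informative already forces $n\rho\to\infty$. Two trivialities to tidy: for $k=2$ the all-distinct walk uses only one edge, so the lower bound there follows from $c\rho\ge(c\rho)^2$ rather than from ``$k$ distinct edges''; and in the cyclic case the clean way to see that smaller $v$ helps is the identity $n^{v-k}\rho^{v}=(n\rho)^{v-k}\rho^{k}$, with $k$-dependent constants absorbing $(n\rho)^{v-k}$ when $n\rho$ is only $\Omega(1)$.
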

	\begin{proof}
		Note that
		\[
		\trace (A^k) =  \sum_{i_1,i_2,\ldots,i_k} A_{i_1i_2}A_{i_2i_3}\cdots A_{i_ki_1}.
		\]
		Since $A_{ij}$'s are Bernoulli random variables, Letting $P_\star := \min_{ij}P_{ij}$ and $P_{\#}:=\max P_{ij}$, we see that 
		\[
		P_{\star}^\ell \le \E A_{i_1i_2}A_{i_2i_3}\cdots A_{i_ki_1} \le P_{\#}^\ell,
		\]
		where $1\le \ell \le k$ is the number of distinct sets in among $\{i_1,i_2\},\{i_2,1_3\},\ldots,\{i_k,i_1\}$. We call $\ell$ the weight of the sequence $i_1,\ldots,i_k$. We can easily see that the total number of sequences is bounded by $n^k$, and the number of sequences with weight $\ell$, call it $N(\ell;k,n)$, is bounded above by $n^{\ell + 1}$. In fact, 
		\[
		N(k;k,n) = n(n-1)(n-2)^{k-3}(n-3) \asymp n^k.
		\]
		We thus have
		\[
		\sum_{\ell=1}^k N(\ell;k,n) P_{\star}^\ell \le \E \trace (A^k) \le \sum_{\ell = 1}^{k} N(\ell;k,n) P_{\#}^\ell.
		\]
		This gives us trivial upper and lower bounds
		\begin{align*}
		C n^k \rho^k \le C n^k P_\star^k \le N(k;k,n) \le \E \trace (A^k) & \le \sum_{\ell = 1}^{k-1} n^{\ell + 1} P_{\#}^\ell + n^k P_{\#}^k \\
		&= n \frac{(nP_{\#})^k - (nP_{\#})}{nP_{\#} - 1} + n^k P_{\#}^k\\
		&\le C n^k {P_{\#}}^{k-1} \le C n^{k} \rho^{k-1}.
		\end{align*}
	\end{proof}
	In the following $C_1, C_2 > 0$ are absolute constants whose value may change from line to line.
	\begin{proof}[Proof of Theorem~\ref{thm:conc_of_feature}]
	First of all, by Proposition~\ref{prop:order_expectation} we have
	\[
	|\log \E m_k(A)| = \Theta (k \log(1/\rho)),
	\]
	from which we conclude that $\|\bar{g}_J(A)\| = \Theta(J^{3/2} \log(1/\rho))$.

	Writing $\mu_k = \E m_k(A)$, and using Theorem~\ref{thm:conc}, we get
	\begin{align*}
	\P(|\log m_k(A) - \log \mu_k| > t) &= \P (\frac{m_k}{\mu_k} - 1 > e^t - 1) + \P (\frac{m_k}{\mu_k} - 1 < -(1 - e^{-t}))\\
	&\le \P(|m_k - \mu_k| > (e^t - 1)\mu_k) + \P(|m_k - \mu_k| > (1 - e^{-t})\mu_k)\\
	& \le C_1 e^{-C_2 \frac{n^2 \mu_k^2 (e^t - 1)^2}{k^2}} + C_1 e^{-C_2 \frac{n^2 \mu_k^2 (1 - e^{-t})^2}{k^2}}\\
	&\le C_1 e^{-C_2 \frac{n^2 \rho^{2k} (e^t - 1)^2}{k^2}} + C_1 e^{-C_2 \frac{n^2 \rho^{2k} (1 - e^{-t})^2}{k^2}},
	\end{align*}
	where in the last line we have used Proposition~\ref{prop:order_expectation}. 
	Using this, along with an union bound we get
	\begin{align*}
	\P(\|g_J(A) - \bar{g}_J(A)\| \ge t) &\le \sum_{k = 2}^J \P(|\log m_k(A) - \log \E m_k(A)| > \frac{t}{\sqrt{J}})\\
	&\le \sum_{k = 2}^J  C_1 e^{-C_2 \frac{n^2 \rho^{2k} (e^{t/\sqrt{J}} - 1)^2}{k^2}} + C_1 e^{-C_2 \frac{n^2 \rho^{2k} (1 - e^{-{t/\sqrt{J}}})^2}{k^2}}.
	\end{align*}
	Choosing $t = \delta J^{3/2} \log(1/\rho)$, where $\delta J \log(1/\rho) = \Omega(1)$, we see that $e^{t/\sqrt{J}} - 1 = \rho^{-\delta J} - 1 = \Omega(\rho^{-\delta J})$ and $1 - e^{-t/\sqrt{J}} = 1 - \rho^{\delta J} = \Omega(1)$. Also note that $\rho^{2k}/k^2 \ge \rho^{2J}/4$. Therefore we have
	\begin{align*}
	\P(\|g_J(A) - \bar{g}_J(A)\| \ge \delta J^{3/2} \log(1/\rho)) &\le J (C_1 e^{-C_2 n^2 \rho^{2J} \rho^{-2\delta J}} + C_1 e^{-C_2 n^2 \rho^{2J}})\\
	&\le J C_1 e^{-C_2 n^2 \rho^{2J}},
	\end{align*}
	which completes the proof.
	\end{proof}


\section{Details of various graph statistics}\label{sec:details_graphstat}
The algebraic connectivity is the second smallest eigenvalue of the Laplacian. However, to make this metric free of the size of a graph, we use the second smallest eigenvalue of the {\em normalized} Laplacian of the largest connected component of a graph. The need for using the largest connected component is that most real graphs without any preprocessing have isolated nodes. The global clustering coefficient measures the ratio of the number of triangles to the number of connected triplets. In contrast, the local clustering coefficient computes the average of the ratios of the number of triangles connected to a node and the number of tripes centered at that node. The distance distribution for $h$ hops essentially calculates the fraction of all pairs of nodes that are within shortest path or geodesic distance of $h$ hops. Essentially this metric calculates how far a pair of nodes are in a graph on average. The Pearson correlation coefficient of a graph measures the assortativity by computing the correlation coefficient between the degrees of the endpoints of the edges in the graph. Finally, the rich-club metric calculates the edge density of the subgraph induced by nodes with degree above a given threshold. For this metric we chose to use the $0.8$-th quantile of the degree sequence of a graph.

\end{document}